\definecolor{Gray}{gray}{0.9}
\newcommand{\muc}[2]{\multicolumn{#1}{c}{#2}}
\newcommand{\statee}{s}
\newcommand{\reals}{\mathbb{R}}
\newcommand{\gaussian}{\mathcal{N}}
\newcommand{\horizon}{\mathrm{K}}
\newcommand{\timeid}{k}
\newcommand{\init}{\mathcal{I}}
\newcommand{\traj}{\sigma}
\newcommand{\trajsim}{\traj^{\mathsf{sim}}}
\newcommand{\trajreal}{\traj^{\mathsf{real}}}
\newcommand{\relu}{\mathsf{ReLU}}
\newcommand{\zon}{\mathsf{Zonotope}}
\newcommand{\overallf}{\mathcal{F}}
\newtheorem{remark}{Remark}
\newtheorem{example}{Example}
\newcommand{\dist}{\trajdist^{\mathsf{real}}}
\newcommand{\distzero}{\trajdist^{\mathsf{sim}}}
\newcommand{\distR}{\trajdistR^{\mathsf{real}}}
\newcommand{\distzeroR}{\trajdistR^{\mathsf{sim}}}
\newcommand{\loss}{\mathcal{L}}
\newcommand{\trajdataset}{\mathcal{T}}
\newcommand{\traindataset}{\trajdataset^{\mathsf{trn}}}
\newcommand{\calibdataset}{\mathcal{R}^{\mathsf{calib}}}
\newcommand{\lpdataset}{\mathcal{T}^{\mathsf{LP}}}
\newcommand{\mypara}[1]{\vspace{0.3em} \noindent{\bf #1}.}  
\newcommand{\myipara}[1]{\vspace{0.3em} \noindent{\em #1}.} 
\def\relu{\mathrm{ReLU}}
\newcommand{\vx}{\mathbf{x}}
\newcommand{\navid}[1]{\textcolor{black}{#1}}
\newcommand{\distinit}{\mathcal{W}}
\newcommand{\states}{\mathcal{S}}
\newcommand{\Statee}{S}
\newcommand{\trajdist}{\mathcal{D}_{\Statee,\horizon}}
\newcommand{\trajdistR}{\mathcal{J}_{\Statee,\horizon}}
\begin{document}
\newtheorem{theorem}{Theorem}
\newtheorem{definition}{Definition}
\newtheorem{lemma}{Lemma}

\title{Statistical Reachability Analysis of Stochastic Cyber-Physical Systems under Distribution Shift}

\author[1]{Navid Hashemi}
\author[1]{Lars Lindemann}
\author[1]{Jyotirmoy V. Deshmukh}
\affil[1]{Thomas Lord Department of Computer Science, University of Southern California}

\maketitle
 
\begin{abstract}
Reachability analysis is a popular method to give safety guarantees for
stochastic cyber-physical systems (SCPSs) that takes in a symbolic description of the
system dynamics and uses set-propagation methods to compute an overapproximation
of the set of reachable states over a bounded time horizon. In this paper, we
investigate the problem of performing reachability analysis for an SCPS that
does not have a symbolic description of the dynamics, but instead is described
using a digital twin model that can be simulated to generate system trajectories.
An important challenge is that the simulator implicitly models a probability 
distribution over the  set of trajectories of the SCPS; however, it is typical 
to have a sim2real gap, i.e., the actual distribution of the trajectories in a
deployment setting may be shifted from the distribution assumed by the simulator.
We thus propose a statistical reachability analysis technique that, given a
user-provided threshold $1-\epsilon$, provides a
set that guarantees that any reachable state during deployment lies in this set with probability
not smaller than this threshold. Our method is
based on three
main steps: (1) learning a deterministic surrogate model from sampled
trajectories, (2) conducting reachability analysis over the surrogate model, and
(3) employing {\em robust conformal inference} using an additional set of sampled
trajectories to quantify the surrogate model's distribution shift with respect to
the deployed SCPS. 
To counter conservatism in reachable sets, we propose a novel method to train 
surrogate models that minimizes a quantile loss term (instead of the usual mean squared
loss), and a new method that provides tighter guarantees using conformal inference
using a normalized surrogate error. We demonstrate the effectiveness of our technique
on various case studies.
\end{abstract}

\numberwithin{equation}{section}
\renewcommand\thetheorem{\arabic{section}.\arabic{theorem}}
\renewcommand\theequation{\arabic{section}.\arabic{equation}}

\section{Introduction}
Safety-critical cyber-physical systems operate in highly dynamic and uncertain
environments. It is common to model such systems as {\em stochastic dynamical
systems} where given an initial configuration (or state) of the system, system
parameter values, and a sequence of exogenous inputs to the system, a {\em
simulator} can provide a system trajectory. Several executions of the simulator
can generate a sample distribution of the system trajectories, and such a
distribution can then be studied with the goal of analyzing safety and
performance specifications of the system. In safety verification analysis, we
are interested in checking if any system trajectory can reach an {\em unsafe}
state. A popular approach for safety verification considers only bounded-time
safety properties using (bounded-time) {\em reachability analysis}
\cite{abate2008probabilistic,abate2007computational,huang2019reachnn,dutta2019sherlock,bansal2017hamilton}.
Here, the typical assumption is that the symbolic dynamics of the simulator
(i.e. the equations it uses to provide the updated state from a previous state
and stimuli) are known. Most reachability analysis methods rely on a
deterministic description of the symbolic dynamics and use set-propagation
methods to compute a {\em flowpipe} or an overapproximation of the set of states
reachable over a specified time horizon. Other methods allow the system dynamics
to be stochastic, but rely on linearity of the dynamics to propagate
distributions over initial states/parameters to compute probabilistic reach sets
\cite{vinod2019sreachtools,vinod2017forward,adzkiya2015computational,gan2017reachability}.

% namical system to over-approximate and determine the set of reachable states,
% often referred to as \textit{'flowpipes'}, over a pre-specified time horizon.
% Reachability analysis of stochastic nonlinear dynamical systems has been a subject of extensive study in the literature 
%Most prior work adopts a \textit{model}-\textit{based} approach,  relying on a symbolic model of the 
% Extensive research has also been conducted in the field of model-free reachability analysis, which can be broadly categorized into two groups. The first group assumes that the underlying distribution of the dataset is identical to the system under study. In real-world scenarios, a distribution shift between the deployment environment and training/testing environment may also occur. The second group considers the mentioned distribution shift and relaxes the assumption of identical underlying distributions between the training/testing environment and the deployment environment.

However, for complex cyber-physical systems, dynamical models may be highly
nonlinear or hybrid with artifacts such as look-up tables, learning-enabled
components, and proprietary black-box functions making the symbolic dynamics
either unavailable, or difficult for existing (symbolic) reachability analysis
tools to analyze them. To address this issue, we pursue the idea of
\textit{model-free analysis}, where the idea is to compute reachable sets for
the system from only sampled system trajectories \cite{hashemi2023data,thorpe2019model}. The
main idea of data-driven reachability analysis in \cite{hashemi2023data}
consists of the following main steps: Step 1. Sample system trajectories based
on a user-specified distribution on a parametric set of system uncertainties
(such as the set of initial states). Step 2. Train a data-driven surrogate model
to predict the next $\horizon$ states from a given state (for example, a neural
network-based model). Step 3. Perform set-propagation-based reachability
analysis using the surrogate dynamics. Step 4. Inflate the computed flowpipe
with a surrogate error term that guarantees that any actually reached state is
within the inflated reach set with probability not smaller than a user-provided
threshold. 

There are three main challenges in this overall scheme: (1) In
\cite{hashemi2023data}, a simple training loss based on minimizing the mean
square error between the surrogate model and the actual system is used. This may
lead to the error distribution to have a heavy tail, which in turn leads to
conservatism in the inflated reach set. (2) The approach in
\cite{hashemi2023data} uses the uncertainty quantification technique of {\em
conformal inference} to construct the inflated flowpipes, but  quantifies
surrogate error per trajectory component (i.e, per state dimension and per
trajectory time-step). These per-component-wise probabilistic guarantees are
then combined using union bounding, i.e., using that $P(A \cup B) \le P(A) +
P(B)$, leading to conservatism. This is because requiring a $1-\epsilon$ probability
threshold on the inflated reach set requires stricter probability thresholds in
the conformal inference step per component, i.e., thresholds $1-\epsilon'$ with $\epsilon' =\frac{\epsilon}{n\horizon}$, where $n$ is the number
of dimensions and $\horizon$ is the number of time-steps in the trajectory. A stricter probability threshold induces a larger
uncertainty set, which implies greater conservatism. (3) The most significant
real-world challenge is that the surrogate model is usually learned based on the
trajectories sampled from the simulator, and thus distributed according to the
assumptions on stochasticity made by the simulator. However, the actual
trajectory distribution in the deployed system may change. Typically, such
distribution shifts can be quantified using divergence measures such as an
$f$-divergence or the Wasserstein distance \cite{vallender1974calculation}.

To address these challenges, we propose a robust and
efficient approach to computing probabilistic reach sets for stochastic systems,
with the following main contributions: (1) We propose novel training algorithms
to obtain surrogate models to forecast trajectories from sampled initial states
(or other model parameters). Instead of minimizing the mean square loss between
predicted trajectories and the training trajectories, we allow minimizing an
arbitrary quantile of the loss function. This provides our models with better
overall predictive performance over the entire trajectory space (i.e., over
different state dimensions and time steps). (2) Similar to
\cite{hashemi2023data}, we utilize conformal inference (CI) to quantify
prediction uncertainty. However, inspired by work in
\cite{cleaveland2023conformal}, we  compute the maximum of the weighted
residual errors to compute the nonconformity score to use with CI which has the
effect of normalizing component-wise residuals. In contrast to
\cite{cleaveland2023conformal}, which solves a linear complementarity problem to
compute these weights, we obtain these weights when training the surrogate
model using gradient descent and backpropagation. (3) Finally, to address
distribution shifts, we use techniques from robust conformal inference
\cite{cauchois2020robust}. Our analysis is motivated by \cite{zhao2023robust} and valid for all trajectory
distributions corresponding to real-world environments that are close
to the original trajectory distribution used for training the surrogate model;
here, the proximity is measured by a certain $f$-divergence metric \cite{csiszar1972class}. 

We show that our training procedure and the use of the max-based nonconformity
score noticeably enhances data efficiency and significantly improves the
conservatism in reachability  analysis. This improvement in data efficiency is
the key factor that enables us to efficiently incorporate robust conformal
inference in our reachability analysis. We empirically validate our algorithms
on challenging benchmark problems from the cyber-physical systems community
\cite{huang2022polar}, and demonstrate considerable
improvement over prior work.

\mypara{Related Work}

% In comparison, our
% approach relies solely on original sampled trajectories with no change of state
% dimension. conformal inference has gained noticeable attention in the community
% due to its data efficiency. 
\myipara{Reachability Analysis for Stochastic Systems with known Dynamics}
Reachability analysis is a widely studied topic and typically assumes access to
the system's underlying dynamics, and the proposed guarantees are valid only on
the given model dynamics. In \cite{lin2023generating}, the authors propose
DeepReach, a method using neural PDE solvers for Hamilton-Jacobi method-based
reachability analysis in high-dimensional systems. While it incorporates neural
methods for reachability analysis, it still requires access to the system
dynamics.  In \cite{alanwar2023data}, the authors identify Markovian stochastic
dynamics from data through specific parametric models, such as linear or
polynomial, followed by reachability analysis on the identified models. In
contrast, our method employs neural networks, which are not confined to
Markovian dynamics. The approach in \cite{yang2022efficient} is an algorithm
that sequentially linearizes the dynamics and uses constrained zonotopes for set
representation and computation. In \cite{bortolussi2014statistical}, the authors
develop a method utilizing Gaussian Processes and statistical techniques to
compute reachable sets of dynamical systems with uncertain initial conditions or
parameters, providing confidence bounds for the reconstruction and bounding the
reachable set with probabilistic confidence, extending to uncertain stochastic
models. 

In \cite{huang2017safety}, the authors introduce a scalable method utilizing
Fourier transforms to compute forward stochastic reach probability measures and
sets for uncontrolled {\em linear systems} with affine disturbances. Similar
approaches are explored in \cite{vinod2019sreachtools,vinod2021stochastic}
for stochastic reachability analysis of linear, potentially time-varying,
discrete-time systems. A constructive
method utilizing convex optimization to determine and compute probabilistic
reachable and invariant sets for linear discrete-time systems under stochastic
disturbances is introduced in \cite{fiacchini2021probabilistic}. We note that
most existing techniques are for systems with linear dynamics, while we permit
arbitrary stochastic dynamics. In Thorpe et al.
\cite{thorpe2021approximate}, a method utilizing conditional distribution
embeddings and random Fourier features is presented to efficiently compute
stochastic reachability safety probabilities for high-dimensional stochastic
dynamical systems without prior knowledge of system structure. We note that this work does not provide finite-data probability guarantees as we do, but asymptotically converge to the exact reachset.

% Vinod et al.\cite{vinod2021stochastic} propose a scalable
% algorithm, utilizing convex optimization, to compute a polytopic
% underapproximation of the stochastic reach set and synthesize an open-loop
%controller. Prandini et al. \cite{prandini2006stochastic}.  

\myipara{Probabilistic Guarantees and Reachability Analysis for unknown Stochastic
Systems} Recent work has studied computation of  reachable sets with probabilistic guarantees directly from data. In \cite{devonport2021data}, the
authors employ level sets of Christoffel functions
\navid{\cite{lasserre2019empirical,marx2021semi}} to achieve 
probabilistic reach sets for general nonlinear systems. \navid{Specifically, let $v_d(\vx)$
denote the vector of monomials up to degree $d$, and let $M$ denote the
empirical moment matrix obtained by computing the expected value of
$v_d(\vx)^\top v_d(\vx)$  by sampling over the set of reachable states. An
empirical inverse Christoffel function $\Lambda^{-1}(\vx)$ is then defined as
$v_d(\vx)^\top M^{-1}v_d(\vx)$. The main idea in
\cite{devonport2020data,tebjou2023data} is to empirically determine
$\Lambda^{-1}(\vx)$ and give probabilistic bounds using the volume of the actual
reachset contained in the sublevel sets of $\Lambda^{-1}(\vx)$.}
In \cite{tebjou2023data}, the authors
extend the method proposed in \cite{devonport2021data} by including
conformal inference. \navid{A key challenge of this approach is estimating
the moment matrix $M$ from data, which may not scale with increasing state
dimension $n$ and user-selected degree $d$, as the dimension of $M$ is ${n+d \choose d}$,
and the approach requires inverting $M$.}
% \navid{that makes the
% dataset ellipsoidally separable to be efficiently classified with christoffel
% function}. \navid{Like the linear classification that suffers from matrix
% inverse computation, this technique also suffer from inverse computation for the
% shape matrix in cristoffel function, that its dimension increases noticeably
% with nonlinearity of systems and dimension of the states.} \navid{ Application
% of christoffel function on data analysis has been also firstly proposed by
% research works \cite{lasserre2019empirical,marx2021semi}.} 

In \cite{devonport2020data}, the authors use a Gaussian process-based classifier
to distinguish reachable from unreachable states and approximate the reachset.
However, the approach requires adaptive sampling of initial states, which may
require solving high-dimensional optimization problems. They also propose an
interval abstraction of the reachset, which, though it provides sample
complexity bounds, can be overly conservative and computationally costly in
high-dimensional systems. The method in \cite{fisac2018general} assumes partial
knowledge of the model and leverages data to handle Lipschitz-continuous
state-dependent uncertainty; their reachability analysis combines probabilistic and worst-case analysis. Finally, the work
presented \cite{dryvr} combines simulation-guided reachability analysis
with data-driven techniques, utilizing a discrepancy function estimated from
system trajectories, which can be challenging to obtain.

\myipara{Reachability analysis for Neural Networks} Recent approaches
have tackled the challenge of determining the output range of a neural network.
These methods aim to compute an interval or a box (a vector of intervals) that
encompasses the outputs of a given neural network. Katz et al. \cite{katz2017reluplex} introduced
Reluplex, an SMT-based approach that extends the simplex algorithm to handle
ReLU constraints. Huang et al. \cite{huang2017safety} employed a
refinement-by-layer technique to verify the presence or absence of adversarial
examples in the vicinity of a specific input. Dutta et al.
\cite{dutta2019sherlock} proposed an efficient method using mixed-integer linear
programming to  compute the  range of a neural network
featuring only ReLU activation functions. Tran et al. \cite{tran2020nnv}
proposes star-sets that offer similar expressiveness as hybrid zonotopes and
are used to provide approximate and exact reachability of
feed-forward ReLU neural networks. In our setting, this method was the most
applicable.

\section{Problem statement and Preliminaries}
\label{sec:prelim}
\mypara{Notation} We use bold letters to represent vectors and vector-valued
functions, while calligraphic letters  denote sets and distributions. The set
$\left\{1,2,\cdots, n \right\}$ is denoted as $[n]$. The Minkowski sum   is indicated by $\oplus$. We use $x \sim
\mathcal{X}$ to denote that the random variable $x$ is drawn from the
distribution $\mathcal{X}$.

% For
% clarity, we illustrate the structure of a feedforward neural network (FFNN) with
% $\ell$ hidden layers as an array $[n_0,n_1,\cdots n_{\ell+1}]$, where $n_0$
% denotes the number of inputs, $n_{\ell+1}$ represents the number of outputs, and
% $n_i$, with $i\in [\ell]$, indicates the width of the $i$-th hidden layer. 

% The cardinality of a set $A$ is denoted by
% $|A|$.

%In addition, we
%make use of the notation $\lceil x \rceil$ to represent the smallest integer
%greater than the real number $x\in \reals$. 

\mypara{Stochastic Dynamical Systems} We consider discrete-time stochastic
dynamical systems. While it is typical to describe such systems using symbolic
equations that describe how the system evolves over time, we instead simply
model the system as  a {\em stochastic process}. In other words, let
$\Statee_0,\ldots,\Statee_\horizon$ be a set of $\horizon+1$ random vectors
indexed by times $0,\ldots,\horizon$. We assume that for all times $\timeid$,
each $\Statee_\timeid$ takes values from the set of states $\states \subseteq
\reals^n$. A realization of the stochastic process, or the {\em system
trajectory}  is a sequence of values $\statee_0,\ldots,\statee_\horizon$,
denoted as $\trajreal_{\statee_0}$. The joint distribution over $\Statee_0,
\ldots,\Statee_\horizon$ is called the {\em trajectory distribution} $\dist$ of
the system, and the marginal distribution of $\Statee_0$ is called the {\em
initial state distribution} $\distinit$. We assume that the initial state
distribution $\distinit$ has support over a compact set of initial states $\init$, i.e., we assume that $\distinit$ is such that $\Pr[\statee_0 \notin \init]=0$.
For example, such a stochastic dynamical system could describe a Markovian
process, where for any $\timeid \ge 1$, the distribution of $\Statee_\timeid$
only depends on the realization of $\Statee_{\timeid-1}$ and not the values
taken at any past time. However, it is worth noting that the techniques
presented in this paper can be applied to systems with non-Markovian
dynamics. 

In the rest of the paper, we largely focus on just the system trajectories, so
we abuse notation to denote $\statee_0 \stackrel{\mathcal{W}}{\sim} \init$ to
signify that $\statee_0$ is a value sampled from $\init$ using the initial state
distribution $\mathcal{W}$.\footnote{$\mathcal{W}$ is assumed to be 
 uniform or  truncated Gaussian distributed in practice. } Similarly,
$\trajreal_{\statee_0} \sim \dist$ is used to denote the sampling of a
trajectory from the trajectory distribution.

% We can consider
% $\mathcal{W}$ as the projection of distribution $\dist$ to a portion of the
% trajectory that presents the initial state. A practical choice for $\mathcal{W}$
% may involve uniform sampling of $\init$.
 
% The problem of estimating the set
% of reachable states for stochastic dynamical systems, starting from a compact
% set of initial states $\init \subset \reals^n$, has been extensively studied
% in the literature. While much of the research in this domain has employed
% model-based methodologies, this paper focuses on  \textit{'model-free
% reachability analysis'} applied to a black-box stochastic dynamical system
% denoted as $M$. 
% In this context, a random trajectory of $M$ can be represented
% as a sequence of time-stamped states, denoted as $\traj_{\statee_0} =
%The distribution for trajectories generated from
% $M$ is denoted as $\dist$, with $\traj_{\statee_0} \sim \dist$ indicating that
% $\traj_{\statee_0}$ is sampled from $\dist$. This distribution, $\dist$, is
% described by two key parameters, including a distribution over the set of
%initial states of the system and stochastic uncertainties in its dynamics. 
% We note that for a special class of systems, the trajectory dynamics might

%We refer to the system
%that we utilize to generate dataset as \textit{training environment} and the
%system under study for reachability analysis as \textit{deployment environment}.
%We denote the trajectories of training environment with $\traj_{\statee_0}^0$
%and the trajectories from deployment environment with $\traj_{\statee_0}$. 

\mypara{Quantification of Distribution Shift} In practice, we usually do not
have knowledge of the  distribution $\dist$. However, one may have access to
trajectories sampled from a distribution $\distzero$ that is ``close'' to
$\dist$, e.g., a simulator. Given a distribution $\mathcal{D}$, we use the
notation $\mathcal{P}(\mathcal{D})$ to denote a set of distributions {\em close}
to $\mathcal{D}$, where the notion of proximity is defined using a suitable
divergence measure or metric quantifying distance between 
distributions. Common examples include $f$-divergence measures
(such as KL-divergence, total variation distance) and metrics such as the
Wasserstein distance
\cite{vallender1974calculation,shafieezadeh2015distributionally}. In this paper,
we assume that $\distzero$ comes from the ambiguity set $\mathcal{P}(\distzero)$
that is centered at $\distzero$ using $f$-divergence balls around $\distzero$
\cite{shafieezadeh2015distributionally}.\footnote{Examples of $f$ include $f(z) = z \log(z)$, which induces the
KL-divergence and $f(z) = \frac{1}{2} \mid z-1 \mid$, which induces the
total variation distance.} Given a  convex function $f: \reals \to
\reals$ satisfying $f(1) = 0$ and $f(z) = +\infty$ for $z < 0$, the
$f$-divergence \cite{csiszar1972class} between the  probability distributions
$\distzero$ and $\dist$ that both have support $\mathcal{Z}$ is
\[
D_f(\dist \parallel \distzero) = \int_{\mathcal{Z}} f\left( \frac{\mathbf{d} \dist}{\mathbf{d} \distzero}\right) \mathbf{d} \distzero.
\]
\noindent Here, the argument of $f$ is the Radon-Nikodym derivative of $\distzero$ w.r.t. $\dist$.
We define the set $\mathcal{P}_{f,\tau}(\distzero)$ as
a $f$-divergence ball of radius $\tau \geq 0$ around $\distzero$ as
 \[
 \mathcal{P}_{f,\tau}(\distzero) = \left\{ \dist  \mid  D_f(\dist\parallel\distzero) \leq \tau \right\}.
 \]
The radius $\tau$ and the function $f$ are both user-specified parameters that
quantify the distribution shift between $\dist$ and $\distzero$ that we have to
account for in our reachability analysis. Specifically, we have to perform
reachability analysis for random trajectories $\trajreal_{\statee_0}\sim \dist$
for all $\dist\in  \mathcal{P}_{f,\tau}(\distzero)$.

\mypara{Conformal Inference} Conformal inference 
\cite{vovk2005algorithmic,lei2014distribution,lei2018distribution} is a
data-efficient statistical tool proposed for quantifying uncertainty,
particularly valuable for assessing the uncertainty in predictions made by
machine learning models \cite{angelopoulos2021gentle,luo2022sample}. 

Consider a set of random variables $z_1,z_2,...,z_{m+1}$ where $z_i=(x_i,y_i)
\in \reals^n \times \reals$ for $i\in [m+1]$. Assume that $z_1,z_2,...,z_{m+1}$
are independent and identically distributed (i.i.d.). Let $\mu(x_i)$ be a
predictor that estimates outputs $y_i$ from inputs $x_i$. With a pre-defined
miscoverage level  $\epsilon \in (0, 1)$, conformal inference enables computation
of a \navid{threshold $d>0$ and a }probabilistic prediction interval $C(x_{m+1})=[\mu(x_{m+1})-d,\
\mu(x_{m+1})+d] \subseteq \reals$ for $y_{m+1}$ that guarantees that
\(
\Pr[ y_{m+1} \in C(x_{m+1})] \geq 1- \epsilon
\). \navid{To compute the threshold $d$,
we reason over the {\em empirical distribution of the 
residual errors} between the predictor and the ground
truth data.
Let $R_i:=|y_i-\mu(x_i)| $ be the residual error
between $y_i$ and $\mu(x_i)$ for  $i \in[m+1] $.} Since the random variables $z_1,z_2,...,z_{m+1}$
are i.i.d., the residuals $R_1,\hdots, R_{m+1}$ are also i.i.d. If $m$ satisfies
$\ell:=\lceil (m+1)(1-\epsilon)\rceil\le m$, then we take the $\ell^{th}$ smallest error among these $m$ values which is equivalent to
\begin{equation}
\label{eq:conf_quant}
R_{1-\epsilon}^* = \text{Quantile}^c_{1-\epsilon} \left\{ R_1, \hdots, R_m, \infty \right\},
\end{equation}
i.e., the $(1-\epsilon)$-quantile over $R_1, \hdots, R_m, \infty$, see \cite{tibshirani2019conformal}. 

\navid{Conformal inference uses this  quantile to obtain the 
probability guarantee $\Pr[R_{m+1} \leq R^*_{1-\epsilon}] \geq (1-\epsilon)$, see 
\cite{tibshirani2019conformal,vovk2005algorithmic}. For the choice of $R_i:=|y_i-\mu(x_i)| $, this can be rewritten as}
\begin{equation} \label{eq:marginal_guarantee}
\navid{\Pr\left[ y_{m+1} \in [\mu(x_{m+1}) - R^*_{1-\epsilon}, \mu(x_{m+1}) + R^*_{1-\epsilon}] \right] \geq 1-\epsilon.}
\end{equation}

The guarantees in \eqref{eq:marginal_guarantee} are marginal\footnote{The
guarantees from conformal inference are marginal over all potentially sampled calibration
sets. The guarantees over some fixed calibration set can be shown to be a random variable
that has distribution $\mathbf{Beta}(\ell, m+1-\ell)$ \cite{angelopoulos2021gentle}. For example, if $m= 10^4$, we get tight probabilistic
guarantees for any $\epsilon\in(0,1)$ as the variance of the
$\mathbf{Beta}$ distribution is bounded by $2.5 \times 10^{-5}$.}, i.e., over the
randomness in $R_{m+1}, R_1, R_2, \hdots, R_m$. Note that $R^*_{1-\epsilon}$ is
a provable upper bound for the $(1-\epsilon)$-quantile\footnote{\navid{For any
$\epsilon \in (0,1)$, the $(1-\epsilon)$-quantile of a random variable $R$ is defined as
$\inf\{z\in\reals | \text{Pr}[R\le z]\ge 1-\epsilon\}$.}} \navid{of the  error
distribution}.

% The conditional probability (regarding
% the fixed calibration dataset) $\Pr[R\le R^*]$ is known to be a random variable
% that follows the distribution $\mathbf{Beta}(\ell, m+1-\ell)$, that is a
% continuous distribution with mean value $\ell/(m+1)$ and variance
% $[\ell(m+1-\ell)]/[ (m+1)^2(m+2) ]$ and assuming $\ell=\lceil (m+1)(1-\epsilon)\rceil$ is centered around $1-{\epsilon}$ with
% decreasing variance as $m$ increases. As an example we can conclude, for all, $\epsilon \in(0,1)$ the choice $m= 10^4$ implies the variance is upper bounded with $2.5 \times 10^{-5}$.
% }.

% = \mathsf{residual}(U; \mu)$} 
% = \mathsf{residual}(V;\mu)$

\mypara{Robust Conformal Inference}  Unlike conformal inference, which assumes
the data-point $z_{m+1}$ is sampled from the same distribution as the calibration samples
$z_i, i \in [m]$, robust conformal inference relaxes this assumption and allows
$z_{m+1}$ to be sampled from a different distribution. Let us denote the
distribution of $z_i$ for $ i \in [m]$ as $U$ and the distribution of $z_{m+1}$ as
$V$. As illustrated before, the residual $R_i$ is a distribution and defined as a function of 
$z_i$. Let us denote the distribution of
$R_i$ for $i\in[m]$ with $P$ and the distribution of $R_{m+1}$ with $Q$. Further, assume $Q$
is in $\mathcal{P}_{f,\tau}(P)$. Utilizing the results
from \cite{cauchois2020robust} that assumes the
distribution of residual $R_{m+1}$ is within a $f$-divergence ball of
the distributions for $R_1, \hdots , R_m$ with radius $\tau \geq 0$, for the miscoverage
level $\epsilon \in (0,1)$, we obtain:
\[
\begin{array}{l}
\Pr[R_{m+1} \leq R^*_{\mathbf{1-\epsilon,\tau}}] \geq 1-\epsilon
\end{array}
\]
where $R^*_{1-\epsilon,\tau} = \text{Quantile}^c_{(1-\bar{\epsilon})} \left\{
R_1, \hdots, R_m , \infty \right\}$  is a \emph{robust}
$(1-\epsilon)$-quantile that is equivalent to the
$(1-\bar{\epsilon})$-quantile. We refer to $\bar{\epsilon}$ as the adjusted
miscoverage level which is computed as $\bar{\epsilon} = 1- g^{-1}_{f,
\tau}(1-\epsilon_m)$ where $\epsilon_m$ is obtained as the solution of a series
of convex optimizations problems as\footnote{\navid{Following \cite{cauchois2020robust},
Lemma A.2., we note that $g_{f,\tau}$ is related to the worst-case CDF of any distribution
with at most $\tau$ distribution shift, and $g^{-1}$ is related to the inverse worst-case CDF.}}:
\begin{equation}
\begin{array}{@{\hspace{-0.3em}}l@{\hspace{0.2em}}c@{\hspace{0.2em}}l}\label{eq:firstpart}
    \epsilon_m & = & 1-g_{f,\tau}\left( \left(1 + \frac{1}{m} \right) g^{-1}_{f,
    \tau}(1-\epsilon) \right), \\ g_{f,\tau}(\beta) & = & \inf \left\{ z \in
    [0,1]\,\middle|\,\beta
    f\!\left(\!\frac{z}{\beta}\!\right)+(1\!-\!\beta)f\!\left(\!\frac{1\!-\!z\!}{1\!-\!\beta}\!\right)
    \leq \tau \right\}\\ g^{-1}_{f,\tau}(\gamma) & = & \sup \left\{ \beta \in
    (0,1)\,\middle|\,g_{f,\tau}(\beta) \leq \gamma \right\} \end{array}
\end{equation}

Computation of $g_{f,\tau}$ and $g^{-1}_{f,\tau}$ is efficient since they are
both solutions to one dimensional convex optimization and therefore admit
efficient binary search procedures. In some cases, we have also access to a
closed form solution \cite{cauchois2020robust}. 
% for example $f(z) = \frac{1}{2}(z-1)^2$ gives,
% \begin{equation}\label{eq:secondpart}
% g_{f,\tau}(\beta) = \max \left( 0, \beta - \sqrt{\tau \beta(1-\beta)}\right), \qquad 
% g_{f,\tau}^{-1}(\gamma) =  \frac{\tau + 2\gamma + \sqrt{\tau(\tau+4\gamma -4\gamma^2)}}{2\tau+2},
% \end{equation}
% % \[
% % \begin{aligned}
% % &g_{f,\tau}(\beta) = \max \left( 0, \beta - \sqrt{2 \tau \beta(1-\beta)}\right), \\
% % &g_{f,\tau}^{-1}(\gamma) =  \frac{\tau + \gamma + \sqrt{\tau(\tau+2\gamma -2\gamma^2)}}{2\tau+1},
% % \end{aligned}
% % \]
% or in another word, for $\alpha \in (0,1], \ \tau \in[0,\infty]$, we have;
% \[
% \tilde{\alpha} = 1 - (1 +\frac{1}{m})( \frac{\tau + 2(1-\alpha) + \sqrt{\tau(\tau+4(1-\alpha) -4(1-\alpha)^2)}}{2\tau+2} )
% \]
% % \[
% % \tilde{\alpha} = 1 - (1 +\frac{1}{m})( \frac{\tau + 1-\alpha + \sqrt{\tau(\tau+2(1-\alpha) -2(1-\alpha)^2)}}{2\tau+1} )
% % \]
\begin{example}
For the total variation, $f(z) = \frac{1}{2}|z-1|$, we have $g_{f,\tau}(\beta) = \max \left( 0, \beta - \tau \right)$, $g_{f,\tau}^{-1}(\gamma) =  \gamma+ \tau,\ \gamma \in (0, 1- \tau)$.
This implies that given radius $\tau \in [0,1]$ an adjusted miscoverage level $\bar{\epsilon}$ is infeasible if $\epsilon \leq \tau$, and 
$\bar{\epsilon}$ is computed as: 
\begin{equation}\label{eq:linearrule}
\bar{\epsilon} = 1 - \left(1+\frac{1}{m}\right)\left(1-\epsilon + \tau\right), \  \epsilon \in (\tau,\ 1] , \tau \in [0, 1]
\end{equation} 
\end{example}
% $\horizon$-step trajectories 
% with the distribution $\trajsim_{\statee_0} \sim
% \distzero,\ \Pr[\statee_0 \notin \init]=0$. Consider 

% \begin{proposition}\label{lem:distrule}
%     Assuming $D_f(\dist \parallel \distzero) = \taureal < \tau$, then the coverage level of robust $\delta$-quantile $R^*_{\delta,\tau}$ on $\dist \in \mathcal{P}_{f,\tau}(\distzero)$ is $\deltareal>\bar{\delta}-\tau-1/m$, where $m$ is the number of recorded data, $\bar{\delta} = 1-\bar{\epsilon}$ and $\bar{\epsilon}$ is adjusted miscoverage level.
% \end{proposition}
% \begin{proof}
%     Since $D_f(\dist \parallel \distzero) = \taureal<\tau$, we can also claim $\dist \in \mathcal{P}_{f,\taureal}(\distzero)$. In this case, we set $\deltareal = \delta+\tau-\taureal$ and we compute for its robust $\deltareal$-quantile. As a direct result of this choice of $\deltareal$ and the equation \eqref{eq:linearrule}, this selection results in $R^*_{\deltareal,\taureal} = R^*_{\delta,\tau}$ since the adjusted miscoverage level $\bar{\epsilon}$ will be the same. This implies the coverage level of $R^*_{\delta,\tau}$ on $\dist$ is $\deltareal$ that is apparently larger than $\delta$. In addition, based on the equation \eqref{eq:linearrule} we have $\delta=\bar{\delta}-\tau - (\delta+\tau)/m$. Since $\delta+\tau<1$ we can claim $\delta>\bar{\delta}-\tau - 1/m$ or in other words, $\deltareal>\bar{\delta}-\tau - 1/m$.    
% \end{proof}

\mypara{Problem Definition} We are given a black-box stochastic dynamical system
as the training environment with the trajectory distribution $\distzero$. We assume that when this system is deployed
in the real world, the trajectories satisfy $\trajreal_{\statee_0}
\sim \dist \in \mathcal{P}_{f,\tau}(\distzero)$. Given a user-specified failure probability
$\varepsilon \in(0,1)$ and an i.i.d. dataset of trajectories sampled from 
$\distzero$, the problem is to obtain a probabilistically guaranteed flowpipe
$X$ that  contains
$\trajreal_{\statee_0} \sim \dist$ for all $\dist \in \mathcal{P}_{f,\tau}(\distzero)$ with a confidence  of 
$1-\varepsilon$. Formally,
\begin{equation}\label{eq:verific_prob}
\left.
\begin{aligned}
&\statee_0 \stackrel{\mathcal{W}}{\sim} \init, \\
&\trajreal_{\statee_0} \sim \dist\in \mathcal{P}_{f,\tau}(\distzero)
\end{aligned}
\right\}\ \implies \Pr\left[ \trajreal_{\statee_0} \in X \right]  \geq  1-\varepsilon
\end{equation}
In other words, we are interested in computing a probabilistically guaranteed flowpipe $X$ from a set of trajectories collected from $ \distzero$ so that $X$ is valid 
for all trajectories $\dist\in \mathcal{P}_{f,\tau}(\distzero)$, i.e., despite a potential distribution shift.  

% The flowpipe is also
% associated with a confidence level of $\Delta = 1-\varepsilon$.
\section{Learning A Surrogate Model Suitable for Probabilistic Reachability Analysis}
\label{sec:surrogate}
% We partition this trajectory dataset into two distinct sets, denoted as
% $D_{\text{train}}$ and $D_{\text{stat}}$. Specifically, $D_{\text{train}}$
% serves as our training dataset and is employed for the purpose of training a
%surrogate model.

% we design our training process such that a given confidence probability of $\delta \in (0, 1)$, the $\bar{\delta}$-quantile of the residuals is to be minimized. We provide more detail on this technique later in the paper.}

As we do not have access to the system dynamics in symbolic
form,  our approach to characterize the trajectory distribution is to use a
predictor,  called the {\em surrogate model}. 
\navid{
\begin{definition}
    A surrogate model $\overallf: \mathcal{X} 
    \times \Theta \to \mathcal{Y}$ is a function that approximates a given function $f:
    \mathcal{X} \to \mathcal{Y}$. Let $d_\mathcal{Y}$ be some metric
    on $\mathcal{Y}$, then the surrogate model guarantees that for some value of
    $\theta \in \Theta$, and for any $x$ sampled from a distribution over
    $\mathcal{X}$, the induced distribution over the random variable
    $d_{\mathcal{Y}}(\overallf(x;\theta),f(x))$ has good approximation properties, such as bounds on the
    moments of the distribution (e.g. mean value) or bounds on the quantile of
    the distribution.
\end{definition} }

\navid{In our setting, the set $\mathcal{X}$ is the set of states $\states$ with the distribution over $\mathcal{X}$ being $\distzero$ and
$\mathcal{Y}$ is the set of $K$-step trajectories $\states^{\horizon}$, i.e., $\overallf$
maps a given initial state (or an uncertain model parameter) to the predicted
$\horizon$-step trajectory of the system. The metric $d_{\mathcal{Y}}$ can be
any metric on the trajectory space.} 
One example surrogate model is a
feedforward neural network (NN) with $n$ inputs and $\horizon n$ outputs,
represented as $\bar{\traj}_{\statee_0} = \overallf(\statee_0 ; \theta)$ where
$\theta$ is the set of  trainable parameters. To train the surrogate
model, we need to define a specific residual error between a set of sampled
trajectories and those predicted by the model. While most surrogate models are
trained using the cumulative squared loss across a training dataset
\cite{james1992estimation}, we consider a loss function that helps us reduce
conservatism in computing the probabilistic reach set of the system.

% e begin by training a neural network surrogate model $\overallf: \reals^n \to \reals^{n(\horizon+1)}$ on the training dataset $\traindataset$. This model

\mypara{Training a Lipschitz-bounded NN based surrogate model} \navid{Training is
a procedure to identify the parameter value $\theta$ which makes the surrogate
model a good approximation; we use a data-driven method to train the surrogate
by sampling $\horizon$-step trajectories from the simulator of the original 
model.}
%where
%$\trajsim_{\statee_0} \sim \distzero,\ \statee_0
%\stackrel{\mathcal{W}^{\mathsf{sim}}}{\sim} \init$. 
We call this dataset $\traindataset$. The
surrogate model predicts the trajectory $\trajsim_{\statee_{0}}$ starting from an initial
state sampled from $\statee_0 \stackrel{\mathcal{W}}{\sim} \init$. We
denote the predicted trajectory $\bar{\traj}_{\statee_{0}}$
corresponding to $\trajsim_{\statee_{0}}$ as:
% $$
% \bar{\traj}_{\statee_{0}} = \overallf(\statee_{0}) = 
% \left[\begin{array}{clll} \statee_0^\top &\mathsf{F}^1(\statee_0) \!\!& \cdots & \!\! \mathsf{F}^n(\statee_0)\  \cdots \\
% & \!\! \ \mathsf{F}^{(n-1)\horizon}(\statee_0) \!\! & \cdots & \!\! \mathsf{F}^{n\horizon}(\statee_0)\end{array}\right]^\top
% $$
\[
\begin{aligned}
&\bar{\traj}_{\statee_{0}} = [ \statee_0^\top,\ \overallf(\statee_{0}\ ; \theta)],\ \   \text{where}, \overallf(\statee_{0}\ ; \theta) = \\
& \ \ \left[\mathsf{F}^1(\statee_0), \cdots, \mathsf{F}^n(\statee_0),  \cdots, 
\mathsf{F}^{(\horizon-1)n+1}(\statee_0), \cdots, \mathsf{F}^{n\horizon}(\statee_0)\right]^\top.
% \left[
% \begin{array}{lll}
% \mathsf{F}^1(\statee_0) & \cdots & \mathsf{F}^n(\statee_0)  % \cdots \\
%\cdots \\
%\mathsf{F}^{(n-1)\horizon}(\statee_0) & \cdots & \mathsf{F}%^{n\horizon}(\statee_0)
%\end{array}\right]^\top
\end{aligned}
\]
Here, $\mathsf{F}^{(i-1)n+r}(\statee_0)$ is the $r^{th}$ state component at the $i^{th}$ time-step in the trajectory. In other words, we stack the dimension and time in the trajectory into a single vector\footnote{The main advantage of training the trajectory as a long vector in one shot is that this approach eliminates the problem of compounding errors in time series prediction; however, this comes with higher training runtimes.}.
We remark that a trained surrogate model with a non-restricted Lipschitz constant is problematic for reachability
analysis, as approximation errors can get uncontrollably magnified resulting in trivial bounds. \navid{As a result, we use techniques from \cite{gouk2021regularisation} to penalize the Lipschitz constant of the trained NN over the course of the training process}.

% \footnote{ A common reason for generation of such a large reach sets is the presence of adversarial examples.} which in turn, can produce flowpipes with boundaries that are trivial and relatively uninformative. This motivates us to restrict ourselves to the robust techniques like Lipschitz bounded training algorithms.}
% \end{remark}

%Recent advances in the literature have demonstrated successful approaches for obtaining precise bounds in the reachability analysis of $\relu$ neural networks using polyhedral sets. \navidd{Therefore, although we are not restricted to $\relu$ neural networks, but the accuracy of these techniques motivates our focus on $\relu$ activation functions in training neural networks as surrogate models}.

% \subsection{Residual Error}
\mypara{Residual Error}
For training neural network surrogate models, a common practice is to minimize a loss function, representing the difference between the trajectory predicted by the surrogate model and the actual trajectory. To formulate this difference, we formally define the notion of the residual error as follows.

\begin{definition}[Residual Error] 
Let
$e_i\in\reals^n$ denote the $i$-th basis vector of $\reals^n$.
For a trajectory $(\statee_0,
\trajsim_{\statee_0})$ with $\trajsim_{\statee_0}$ sampled from 
$\distzero$, and $\statee_0 \stackrel{\mathcal{W}}{\sim} \init$, we define:
\begin{equation}
\label{eq:compwise_residual}
R^j =
\left|e_{j+n}^\top \trajsim_{\statee_0} - \mathsf{F}^j(\statee_0)\right|,\quad j \in[n\horizon].
\end{equation}
Note that $R^j$ is a non-negative prediction error between the
$(j+n)^{th}$ component\footnote{There is offset of $n$ as the first $n$
components of $\trajsim_{\statee_0}$ are the initial state.} of
$\trajsim_{\statee_0}$ and its prediction $\mathsf{F}^j(\statee_0) , j \in [n\horizon]$. 
The trajectory residual $R$ is then
defined as the largest among all scaled, component-wise prediction errors with scaling factors $\alpha_j >0, j\in[n\horizon]$, i.e., $R$ is defined as 
\begin{equation}\label{eq:defresidual}
R  = \max \left( \alpha_1 R^1, \alpha_2 R^2, \cdots, \alpha_{n\horizon}R^{n\horizon}\right).  
\end{equation}
\end{definition}

% is a specific selection for the function $R =
% \mathsf{residual}(\trajsim_{\statee_0},\statee_0; \overallf)$, that 

Note that this definition is inspired by \cite{cleaveland2023conformal}\footnote{In this definition, we consider component-wise
residual for $R^j$ instead of a state-wise residual as the component
$e_{j+n}^\top \trajsim_{\statee_0}$ in $\trajsim_{\statee_0}$ may represent
different quantities like velocity or position. State-wise residuals may lead to
a higher level of conservatism in robust conformal inference, as the magnitude of
error in different components of a state may be noticeably different.}. Compared
to \cite{hashemi2023data}, utilizing the maximum of weighted errors obviates
the need to union bound component-wise probability guarantees to obtain a
trajectory-level guarantee. Let $R_i= \max(\alpha_1 R_i^1,\  \alpha_2 R_i^2,\hdots, \alpha_{n\horizon}
R_i^{n\horizon})$ for $ i \in [|\traindataset|]$ denote the trajectory residual as in \eqref{eq:defresidual} for the training dataset $\traindataset$.

\mypara{Training using a $\bar{\delta}$-quantile loss}
\navid{Let $\bar{\delta} = 1-\bar{\epsilon}$ where $\bar{\epsilon}$ is the adjusted miscoverage level as defined previously.} The ultimate goal from training a surrogate model is to achieve a higher level
of accuracy in our reachability analysis. The mean squared error (MSE) loss
function is a popular choice to train surrogate models; however, \navid{we later show that our proposed flowpipe is generated based on the quantile of the trajectory residual error. Although the MSE loss function is popular and efficient, it may result in a heavy tailed distribution for the residual error which can imply a noticeably larger quantile and result in conservative flowpipes.}
% consider a
% scenario where large errors for some trajectory components are subdued by
% several small errors due to averaging. Here, though the MSE loss is small, when
% we later use conformal inference to quantify component-wise error, the large
% residuals can lead to excess conservatism when computing the probabilistic reach
% sets.
Thus, to improve overall statistical guarantees, we are
interested in minimizing the $\bar{\delta}$-quantile of the trajectory-wise residuals,
for an appropriate $\bar{\delta}\in [0,1)$; towards that end, we add a new trainable parameter
$q$. We can also setup the training process such that the scaling factors 
$\alpha_1,\ldots,\alpha_{n\horizon}$ become decision variables for the optimization
problem. Thus, the set of trainable parameters includes the NN parameters 
$\theta$, the scaling factors $\alpha_1,\cdots, \alpha_{n\horizon}$ and the parameter $q$ that approximates the $\bar{\delta}$-quantile of the residual loss. We define two loss functions:
\begin{enumerate}[wide, labelwidth=!, labelindent=0pt]

 \item The first loss function $\loss_1$ is to set the trainable parameter $q$
 as the $\bar{\delta}$-quantile of trajectory-wise residuals. This loss function is
 inspired from literature on quantile regression \cite{koenker2005quantile}, and it is a well-known result that  minimizing
 this function yields $q$ to be the $\bar{\delta}$-quantile of $R_1,\ldots,R_{|\traindataset|}$. \navid{ Thus, given a batch of training data points of size $M < |\traindataset|$, let
\begin{equation}\label{eq:L1}
\loss_1 = \sum_{i=1}^{M} \bar{\delta} \ \relu(R_i - q) + (1-\bar{\delta})\  \relu(q-R_i) . 
\end{equation}
}
\item  Assuming $q$ as the $\bar{\delta}$-quantile of the i.i.d. residuals
$R_i$, we let the  second loss function
$\loss_2$  minimize 
\begin{equation}\label{eq:L2}
\loss_2 = q \left( \frac{1}{\alpha_1} + \frac{1}{\alpha_2} + \cdots + \frac{1}{\alpha_{n\horizon}} \right).
\end{equation}
This is motivated by the fact that, for all $j\in[n\horizon]$, $R_i^j \leq
R_i/\alpha_{j}$ by the definition of $R_i$. Thus, the sum of errors over the trajectory components is
upper bounded by:
\begin{equation}\label{eq:UB}
\mathbf{UB}_i = R_i \left( \frac{1}{\alpha_1} + \frac{1}{\alpha_2} + \cdots + \frac{1}{\alpha_{n\horizon}} \right),
\end{equation}
and the $\bar{\delta}$-quantile of $\mathbf{UB}_i, i \in [|\traindataset|]$ is nothing but
$\loss_2$\footnote{In case we replace $\loss_2$ with $q$, the trivial solution
for scaling factors is $\alpha_j = 0 , j\in[n\horizon]$. Therefore, the proposed
secondary loss function $\loss_2$ also results in avoiding the trivial
solution for scaling factors.}.

    % The surface area of the inflating zonotope is given by the formulation:
    % \[
    %  S = R^* \left( \frac{1}{\alpha_1} + \frac{1}{\alpha_2} + \cdots + \frac{1}{\alpha_{n\horizon}} \right).
    %  \]
    % Here, \(R^*\) represents the conformalized \(\delta\) quantile of residuals $R_i = \max( \alpha_1 R_i^1,\  \alpha_2 R_i^2,\hdots, \alpha_{n\horizon} R_i^{n\horizon})$, where \(i \in [L]\) from the calibration dataset. It's important to note that the calibration dataset cannot be utilized in the training algorithm, therefore, we can not accurately compute \(R^*\) with the training dataset, i.e. \(i \in [L_2]\). On the other hand,
\end{enumerate}         
Therefore, we define the loss function as,
\begin{equation}\label{eq:mainloss}
 \loss = c\loss_1+ \loss_2, 
 \end{equation}
 where $c$ is a large number that penalizes $\loss_1$ to make sure that $q$ serves as a good approximation for the $\bar{\delta}$-quantile. The training itself uses standard back-propagation methods for computing the gradient of the loss function, and uses stochastic gradient descent to train the surrogate model.

\mypara{Properties of surrogate model} \navid{
We pick neural networks (NN) as surrogate models due to their computational
advantages and the ability to fit arbitrary
nonlinear functions with low effort in tuning hyper-parameters. We note that the input layer of the NN is
always of size $n$ (the state dimension), and the output layer
is of size $n\horizon$ (the dimension of the predicted 
trajectory over $K$ time-steps.) In our experiments, we choose NNs with 2-3 hidden layers for which we observed good results; picking more hidden layers will give better
training accuracy, but may cause overfitting. In each hidden
layer we pick an increasing number of neurons between $n$ and
$n\horizon$.}

% Given a trajectory
%  $\trajsim_{\statee_0}$ from training dataset, the loss function generates the
%  residual errors $R^j, j\in [n\horizon]$ from \eqref{eq:compwise_residual} and
%  using the trainable parameters $\alpha_j, j\in[n\horizon]$ it generates the
%  residual $R$ from \eqref{eq:defresidual}. At each iteration of backpropagation,
%  given a mini-bach of dataset $(\statee_{0,i}, \trajsim_{\statee_{0,i}}) ,
%  i\in[M]$, the loss function utilizes the residuals $R_i, i\in[M]$ to generate
%  $\loss_1$ and $\loss_2$ from \eqref{eq:L1} and \eqref{eq:L2}. The training
%  algorithm also uses $\loss = c\loss_1 + \loss_2$  for backward computation with
%  respect to $(\theta, \alpha_1, ...,\alpha_{n\horizon}, q)$ and updates them via
%  SGD that is also equipped with some weight regularization tools
%  \cite{gouk2021regularisation} to penalize the Lipstchitz constant of NN with
%  respect to the input datapoints $\statee_{0,i} , i\in [M]$.
%  
\section{Scalable Data-Driven Reachability Analysis}
\label{sec:verification}
% In contrast, $\calibdataset$ constitutes our dataset,
% utilized to construct the calibration dataset essential for deriving statistical
% properties. Mathematically, we represent $\calibdataset$ as
% $\calibdataset=\{\statee_{0,i}, \trajsim_{\statee_0,i}\}_{i=1}^L$, where $L$
% denotes the number of calibration trajectories. We also denote the number of
% train trajectories with $|\traindataset |$. 

% It is worth noting that although the data points
% within a single trajectory may not be independently and identically distributed
% (i.i.d.) the trajectory $\trajsim_{\statee_0}$ can be regarded as
% i.i.d. samples within the $\reals^{n(\horizon+1)}$-space. This observation
% plays a pivotal role in our subsequent quantification of uncertainty using
% conformal inference.

In this section, we show how we can compute a robust probabilistically
guaranteed reach set or flowpipe $X\subset \reals^{n(\horizon+1)}$ for a
stochastic dynamical system. Given a miscoverage level $\epsilon$, we wish to be
at least $(1-\epsilon)$-confident about the reach-set that we compute. For
brevity, we introduce $\delta = (1-\epsilon)$. In the procedure that we
describe, we compute a probabilistically guaranteed $\delta$-confident flowpipe, defined as follows: 
% \footnote{We later show that our
% procedure guarantees that $\Delta > \delta$, and the relationship between $\Delta$ and $\delta$ will be clarified later in this section.}
\begin{definition}[$\delta$-Confident Flowpipe] For a given confidence
probability $\delta\in (0,\ 1)$, a distribution $\distzero$, the radius $\tau$,
and a $f$-divergence ball $\mathcal{P}_{f,\tau} (\distzero)$, we say that $X
\subseteq \reals^{n(\horizon+1)}$ is a $\delta$-confident flowpipe if  we have
% \begin{equation}\label{eq:confreg}
$\Pr[\trajreal_{\statee_0} \in X ] \geq \delta$ for any
random trajectory $\trajreal_{\statee_0} \sim \dist \in \mathcal{P}_{f,\tau}
(\distzero)$ with $\statee_0 \stackrel{\mathcal{W}}{\sim} \init$.
%\end{equation}
\end{definition}
Our objective is to compute $X$ while being limited to sample trajectories from the training environment $\distzero$. We will demonstrate that we can compute $X$ with formal probabilistic guarantees by combining reachability analysis on the surrogate model trained from $\traindataset$ and error analysis on this model via robust conformal inference. 

\mypara{Deterministic Reachsets for the Surrogate Models} Using the surrogate model
from Section \ref{sec:surrogate}, we  show how to perform
deterministic reachability analysis to get {\em surrogate flowpipes}.
\begin{definition}[Surrogate flowpipe] 
The surrogate flowpipe $\bar{X}\subset \reals^{n(\horizon+1)}$ is defined as a 
superset of the image of $\overallf(\init\ ;\theta)$. Formally, for all 
$\statee_0\in \init$, we need that
$[\statee_0^\top,\ \overallf(\statee_0\ ;\theta) ] \in \bar{X}$.   
\end{definition}
% \textcolor{blue}{Since the surrogate model predicts the overall trajectory a long vector, we do not face the famous problem of compounding error for time series  }This prediction task presents the challenge of accumulating errors, leading to unexpected deviation in the predicted trajectory from the actual system trajectory. 

Thus, to compute the surrogate flowpipe, we essentially need to compute the
image of $\init$ w.r.t. the $\overallf$. This can be accomplished by
performing reachability analysis for neural networks, e.g., using tools such as
\cite{tran2019star,tran2020nnv,zhang2018efficient,huang2019reachnn}.

% \navidd{In other words}, by partitioning the set of
% initial conditions $\init$ into $N$ sub-partitions and performing reachability
% analysis on each sub-region in parallel, we achieve noticeable improvement in
% computational efficiency and accuracy of bounds for both exact-star and
% approx-star techniques.

\mypara{Robust $\delta$-Confident Flowpipes}
In spite of training the surrogate model to maximize prediction accuracy, it is
still possible that a predicted trajectory is not accurate, especially when predicting
the system trajectory from a previously unseen initial state. \navid{Note also that we trained the surrogate model on trajectory data from $\distzero$. We thus cannot expect the predictor to always perform well on trajectories drawn from $\dist$.} We now show how
to quantify this prediction uncertainty using robust conformal inference. To do so,
we first sample an i.i.d. set of trajectories from the training environment $\distzero$, 
which we again denote as the calibration dataset.

\begin{definition}[Calibration Dataset] The calibration dataset $\calibdataset$
is defined as:
\[
\calibdataset = 
\left\{ 
    \left(\statee_{0,i}, R_i \right) \middle|
    \begin{array}{l}
        \statee_{0,i} \stackrel{\mathcal{W}}{\sim} \init, \, \trajsim_{\statee_{0,i}} \sim \distzero,  \  R_i = \max\left( \alpha_1 R_i^1, \cdots, \alpha_{n\horizon}R_i^{n\horizon}\right)
    \end{array}
\right\}.
\]
\noindent Here, $\trajsim_{\statee_{0,i}}$ refers to the trajectory
starting at the $i^{th}$ initial state sampled from $\mathcal{W}$ and the
resulting trajectory  from $\distzero$,
and $R_i^j$ is as defined in equation~\eqref{eq:compwise_residual}.
\end{definition}

\begin{remark}
It is worth noting that although the data points within a single trajectory may
not be i.i.d., the trajectory
$\trajsim_{\statee_0}$ can be treated as an i.i.d. random vector in the
$\reals^{n(\horizon+1)}$-space, and subsequently the residuals are also i.i.d. This
is crucial to apply robust conformal inference, which requires that the calibration set
is exchangeable (a weaker form of i.i.d.).
\end{remark}

%     \item The distribution shift between the training environment and the deployment environment is also another potential source for the prediction error.
% \end{enumerate} 
% To give probabilistic bounds on this error, we  utilize robust conformal inference that also considers the errors generated from distributional shift.

% We have already defined the residual $R$ for a sampled % trajectory in section
% \ref{sec:surrogate}. Let us now compute this residual for all trajectories from
% $\calibdataset$, i.e. for $\trajsim_{\statee_{0,i}}, i \in [L]$.

% Our access to the training environment is limited to a finite number of pre-recorded trajectories, with $\trajsim_{\statee_{0, i}} \sim \distzero, \ i \in [L]$ which provides us residuals with induced distributions $R_i \sim \distzeroR$ where, 

Let $\distzeroR$ be the distribution over trajectory-wise residuals for trajectories from $\trajsim_{\statee_0} \sim \distzero$. However, we wish to
get information about the trajectory-wise residual $R$ for a
trajectory sampled from  $\dist \in
\mathcal{P}_{f,\tau}(\distzero)$. Let the distribution of $R$ 
induced by $\dist$ be denoted by $\distR$.
% \begin{lemma}\label{rem:dpineq} [Theorem 4.2 in \cite{wu2019information}]
% % Consider a channel that produces $R$ for the trajectory $\sigma_{\statee_0}$,
% % and surrogate model $\overallf$, based on the law proposed in
% % \eqref{eq:defresidual}. 
% Let $R$ be sampled from the induced distribution $\distzeroR$ (corresponding to
% trajectories $\trajsim_{\statee_0} \sim \distzero$), and let $\distR$ be the
% induced distribution of $R$ (corresponding to trajectories
% $\trajreal_{\statee_0} \sim \dist$), then if $f(1)=0$, $f$ is convex, and $f$ is
% strictly convex at $1$, then for any $f$-divergence $D_f$,
% \[
% D_f(\distzeroR \parallel \distR) \le D_f(\distzero \parallel \dist )
% \]
% \end{lemma}
% \noindent This lemma is also known as the data processing inequality.
\navid{As a direct result from the data processing inequality \cite{beaudry2011intuitive}, the distribution shift between $\dist$ and $\distzero$ is  larger than the distribution shift between $\distR$ and $\distzeroR$} 
% regardless of any accurate knowledge about $\distzero$, given that
% $\dist \in \mathcal{P}_{f,\tau}(\distzero)$, there is a radius
% $\rho \geq 0$ such that, $\distR \in \mathcal{P}_{f,\rho}(\distzeroR)$, 
% Since, the
% trajectory distributions $\distzero$ and $\dist \in
% \mathcal{P}_{f,\tau}(\distzero)$ are both continuous, the function $f$ satisfies
% the condition proposed in Lemma \ref{rem:dpineq}, and the residual is also a
% measurable function from the sampled trajectories, 
%  where $\rho \leq
% \tau$. Therefore, the radius $\tau$ that is specified for the divergence between
% trajectory distributions is also a valid upper bound for the distribution shift
% between residual distributions $\distzeroR, \distR$, and 
so that we have $\distR
\in \mathcal{P}_{f,\tau}(\distzeroR)$. 

Knowing that $\distR \in \mathcal{P}_{f,\tau}(\distzeroR)$, we can utilize robust
conformal inference in \cite{cauchois2020robust} to find a guaranteed upper
bound for the $\delta$-quantile of $R$. We call this guaranteed upper
bound as robust conformalized $\delta$-quantile, and we denote it with $R^*_{\delta,\tau}$,
where, $\Pr[R \leq R^*_{\delta,\tau}] \geq \delta$. Specifically, we utilize equation
\eqref{eq:firstpart} to compute $R^*_{\delta,\tau}$ from the calibration dataset $\calibdataset$.

% This is important to note that this methodology only requires access to $\calibdataset, f, \rho$ and does not require accurate knowledge of distributions $\distzero, \dist$. The ultimate objective is to compute a $\Delta$-confident flowpipe $X$ for trajectories $\trajreal_{\statee_0} \sim \dist \in \mathcal{P}_{f, \rho}(\distzero)$ with the initial state $\Pr[ \statee_0 \notin \init]=0$. To achieve this, we compute $R^*$ from robust conformal inference, and utilize it for the proposition of the \navidd{following result.

Next we show that our definition of residual error introduced in~\eqref{eq:defresidual} allows us to use a single trajectory-wise
nonconformity score for applying robust conformal inference (instead 
of the component-wise conformal inference as in \cite{hashemi2023data}).

\begin{lemma}\label{lem:maxcontribution}
    Assume $R^*_{\delta,\tau}$ is the $\bar{\delta}$-quantile  computed over the residuals $R_i$ from the calibration dataset $\calibdataset$. For the residual $R = \max\left( \alpha_1 R^1 , \alpha_2 R^2, \cdots, \alpha_{n\horizon}R^{n\horizon} \right)$ sampled from the distribution $\distR
\in \mathcal{P}_{f,\tau}(\distzeroR)$, it holds that
    \[
    \Pr\left[ \bigwedge_{j=1}^{n\horizon} \left[ R^j \leq R^*_{\delta,\tau}/\alpha_j \right]\right] \geq \delta
    \]
    where $R^j$ is again the component-wise residual for $j\in [n\horizon]$.
\end{lemma}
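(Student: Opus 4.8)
The plan is to reduce the claimed conjunction bound to the single scalar guarantee $\Pr[R \le R^*_{\delta,\tau}] \ge \delta$ already furnished by robust conformal inference, using only that the scaling factors $\alpha_j$ are strictly positive. In other words, the lemma is not really a new probabilistic statement; it is the observation that the trajectory-wise nonconformity score $R$ packages all $n\horizon$ component-wise events into one.

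First I would invoke the machinery recalled earlier. By the Remark, each $\trajsim_{\statee_0}$ is an i.i.d.\ draw in $\reals^{n(\horizon+1)}$, hence the scalar $R$ obtained by applying the fixed map in \eqref{eq:defresidual} is i.i.d.\ as well, so the calibration residuals $R_1,\ldots,R_{|\calibdataset|}$ are exchangeable. Moreover $\distR \in \mathcal{P}_{f,\tau}(\distzeroR)$, which follows from the data processing inequality applied to the (measurable, deterministic) map sending a trajectory to its residual $R$. Therefore the result of \cite{cauchois2020robust} recalled around \eqref{eq:firstpart} applies: taking $R^*_{\delta,\tau}$ to be the robust conformalized $\delta$-quantile, i.e.\ the $\bar{\delta}$-quantile of $\{R_1,\ldots,R_{|\calibdataset|},\infty\}$, we obtain $\Pr[R \le R^*_{\delta,\tau}] \ge \delta$ for $R \sim \distR$.

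Next I would rewrite the event $\{R \le R^*_{\delta,\tau}\}$. Since $R = \max(\alpha_1 R^1,\ldots,\alpha_{n\horizon}R^{n\horizon})$ and a maximum of finitely many terms is at most $t$ exactly when every term is,
\[
\{R \le R^*_{\delta,\tau}\} \;=\; \bigcap_{j=1}^{n\horizon}\bigl\{\alpha_j R^j \le R^*_{\delta,\tau}\bigr\}.
\]
Because each $\alpha_j > 0$, dividing the inequality by $\alpha_j$ is an equivalence, so $\{\alpha_j R^j \le R^*_{\delta,\tau}\} = \{R^j \le R^*_{\delta,\tau}/\alpha_j\}$ for every $j$. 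Hence the events $\{R \le R^*_{\delta,\tau}\}$ and $\bigwedge_{j=1}^{n\horizon}\{R^j \le R^*_{\delta,\tau}/\alpha_j\}$ coincide; taking probabilities of both sides and combining with the bound of the previous step gives the claim.

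The only thing requiring care — rather than a genuine obstacle — is checking that the hypotheses of robust conformal inference really hold in our construction: exchangeability of the calibration residuals (which is precisely why $R$ is defined as a deterministic function of the whole trajectory viewed as one i.i.d.\ vector rather than as a time series), and the inclusion $\distR \in \mathcal{P}_{f,\tau}(\distzeroR)$, which rests on the non-increase of $f$-divergence under the residual map. One should also keep in mind the standing feasibility assumption that the adjusted level $\bar{\delta} = 1-\bar{\epsilon} \in [0,1)$ is well defined for the given $\epsilon$, $f$ and $\tau$ (as in the total-variation example, which needs $\epsilon > \tau$); under that assumption the argument above is complete.
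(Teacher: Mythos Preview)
Your proposal is correct and follows essentially the same approach as the paper: obtain $\Pr[R \le R^*_{\delta,\tau}] \ge \delta$ from robust conformal inference, then use that $\{R \le R^*_{\delta,\tau}\} = \bigcap_{j}\{R^j \le R^*_{\delta,\tau}/\alpha_j\}$ because $R$ is the max of the $\alpha_j R^j$ with $\alpha_j>0$. Your write-up is in fact a bit more careful than the paper's (you make explicit the role of $\alpha_j>0$, exchangeability, and the data processing inequality), but the underlying argument is identical.
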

\begin{proof}
The proof follows as the residual $R$ is the maximum of the scaled version of component-wise residuals so that
\[
 R = \max\left( \alpha_1 R^1 , \alpha_2 R^2, \cdots, \alpha_{n\horizon}R^{n\horizon} \right)\!\! \iff \!\! \bigwedge_{j=1}^{n\horizon} \left[ R^j \leq \frac{R}{\alpha_j} \right].
\]
Now, since $\Pr\left[ R \leq R^{*}_{\delta,\tau} \right] \geq \delta$ as well as $R<R^*_{\delta,\tau} \iff R^j< R^*_{\delta,\tau}/\alpha_j$ for all $j\in [n\horizon]$, we can claim that  $\displaystyle \Pr\left[ \bigwedge_{j=1}^{n\horizon} \left[ R^j \leq R^*_{\delta,\tau}/\alpha_j \right]\right] \geq \delta$   
\end{proof}
% We take advantage of this valuable feature of our selection for residual, proposed by Lemma \ref{lem:maxcontribution} and utilize it in the probabilistic reachability analysis. 

Next, we introduce the notion of an inflating zonotope to define the inflated flowpipe from the surrogate flowpipe.

\begin{definition}[Inflating Zonotope]\label{def:in_z} A zonotope $\zon(b,A)$ is defined
as a centrally symmetric polytope with $b \in \reals^k$ as its center, and $A =
\{ g_1,\ldots,g_p \}$ is a set of generators, where $g_i\in\reals^k$, that represents the set $\{ b + \mu_i g_i \mid \mu_i \in [-1,1]\}$.
Here, we introduce the inflating zonotope with base vector, 
\[
A = \mathbf{diag}\left(0_{1\times n},  \frac{R^*_{\delta,\tau}}{\alpha_1}, \cdots, \frac{R^*_{\delta,\tau}}{\alpha_{n\horizon}}\right),
\]
and center, $b$ is the vector $\mathbf{0}$ of length $(n+1)\horizon$; the notation $\mathbf{diag}(v)$ represents a diagonal matrix with the elements of $v$ along its diagonal and off-diagonal elements being zero. 
\end{definition}

 Including this inflating zonotope in our probabilistic reachability analysis leads to the following result.
\begin{theorem}\label{lem:inclusion_conformal}
Let $\bar{X}$ be a surrogate flowpipe of the surrogate model $\overallf$ for the set of initial conditions $\init$. Let $R^{*}_{\delta,\tau}$ be computed from the calibration dataset $\calibdataset$, as shown before. If we use $R^*_{\delta,\tau}$ to construct the inflated surrogate flowpipe,
\[
X = \bar{X} \oplus \zon(0, \mathbf{diag}([0_{1\times n}, e )),\quad e = \left[ R^*_{\delta,\tau}/\alpha_1, \cdots, R^*_{\delta,\tau}/\alpha_{n\horizon} \right],
\]
then it holds that $X$ is a $\delta$-confident flowpipe for any $\trajreal_{\statee_0} \sim \dist \in \mathcal{P}_{f,\tau}(\distzero)$ with $\statee_0 \stackrel{\mathcal{W}}{\sim} \init$.
\end{theorem}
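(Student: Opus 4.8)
The plan is to reduce the statement to Lemma~\ref{lem:maxcontribution} together with an elementary containment argument for the Minkowski sum; almost all the work is already done in that lemma. First I would record the deterministic ingredient coming from neural-network reachability: by the definition of a surrogate flowpipe, $[\statee_0^\top,\ \overallf(\statee_0\ ;\theta)] \in \bar{X}$ for every $\statee_0 \in \init$; since $\statee_0 \stackrel{\mathcal{W}}{\sim} \init$ and $\Pr[\statee_0 \notin \init]=0$, this containment holds almost surely for the sampled initial state.

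Next I would bring in the statistical ingredient. Define, for the \emph{real} trajectory, the component-wise residuals $R^j := |e_{j+n}^\top \trajreal_{\statee_0} - \mathsf{F}^j(\statee_0)|$ for $j \in [n\horizon]$ and $R := \max_j \alpha_j R^j$. By the data-processing argument discussed just before Lemma~\ref{lem:maxcontribution}, a distribution shift $\dist \in \mathcal{P}_{f,\tau}(\distzero)$ induces $\distR \in \mathcal{P}_{f,\tau}(\distzeroR)$, so the hypotheses of Lemma~\ref{lem:maxcontribution} are met and it yields that, with probability at least $\delta$, the event $E := \bigwedge_{j=1}^{n\horizon}\{R^j \le R^*_{\delta,\tau}/\alpha_j\}$ holds. (If $R^*_{\delta,\tau}=\infty$ the conclusion is vacuous, so we may assume it is finite.)

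The third step is the geometric observation that $E$, intersected with the almost-sure event above, forces $\trajreal_{\statee_0}\in X$. Consider the difference vector $\mathbf{v} := \trajreal_{\statee_0} - [\statee_0^\top,\ \overallf(\statee_0\ ;\theta)] \in \reals^{n(\horizon+1)}$. Its leading $n$ coordinates vanish, because both $\trajreal_{\statee_0}$ and its surrogate prediction begin with the common initial state $\statee_0$; its $(n+j)$-th coordinate equals $e_{j+n}^\top\trajreal_{\statee_0} - \mathsf{F}^j(\statee_0)$, whose absolute value is $R^j$, which on $E$ is at most $R^*_{\delta,\tau}/\alpha_j$. Writing the inflating zonotope explicitly as $\zon(0, \mathbf{diag}([0_{1\times n},\, e])) = \{\, \sum_{j=1}^{n\horizon} \mu_j (R^*_{\delta,\tau}/\alpha_j)\, e_{j+n} \mid \mu_j \in [-1,1] \,\}$, the bound on the coordinates of $\mathbf{v}$ says precisely that $\mathbf{v}$ lies in this zonotope. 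Hence $\trajreal_{\statee_0} = [\statee_0^\top,\ \overallf(\statee_0\ ;\theta)] + \mathbf{v} \in \bar{X} \oplus \zon(0, \mathbf{diag}([0_{1\times n},\, e])) = X$. Taking probabilities, $\Pr[\trajreal_{\statee_0}\in X] \ge \Pr[E] \ge \delta$, which is exactly the definition of a $\delta$-confident flowpipe for any $\trajreal_{\statee_0} \sim \dist \in \mathcal{P}_{f,\tau}(\distzero)$ with $\statee_0 \stackrel{\mathcal{W}}{\sim} \init$.

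I do not anticipate a substantive obstacle: the hard content is already isolated in Lemma~\ref{lem:maxcontribution} and, behind it, the robust conformal guarantee of \cite{cauchois2020robust} and the $f$-divergence data-processing step that transfers $\mathcal{P}_{f,\tau}(\distzero)$ to $\mathcal{P}_{f,\tau}(\distzeroR)$. The two points that genuinely need care are (i) making explicit that the leading $n$ coordinates of $\trajreal_{\statee_0}$ and of its prediction coincide, which is what justifies the zero generator block $0_{1\times n}$ (no inflation is needed in the initial-state coordinates); and (ii) the bookkeeping between $\delta$ and $\bar{\delta}$ — recalling that the robust $\delta$-quantile $R^*_{\delta,\tau}$ is, by the construction in \eqref{eq:firstpart}, the ordinary $\bar{\delta}$-quantile of the calibration residuals, so that the invoked probability bound is genuinely at level $\delta$ and is marginal over the calibration set as well as the test trajectory.
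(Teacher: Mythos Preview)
Your proposal is correct and follows essentially the same route as the paper's proof: invoke Lemma~\ref{lem:maxcontribution} for the probabilistic bound on the component-wise residuals, then combine with the deterministic surrogate-flowpipe containment $[\statee_0^\top,\overallf(\statee_0;\theta)]\in\bar{X}$ to conclude via the Minkowski-sum structure of $X$. Your write-up is slightly more explicit than the paper's about why the leading $n$ coordinates of the difference vector vanish and about the $\delta$ versus $\bar{\delta}$ bookkeeping, but the argument is the same.
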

\begin{proof}Assume again that $\trajreal_{\statee_0} \sim \dist \in \mathcal{P}_{f,\tau}(\distzero)$ with $\statee_0 \stackrel{\mathcal{W}}{\sim} \init$, and recall that, 
$$
R  = \max[\alpha_1 R^1, \hdots, \alpha_{n\horizon}R^{n\horizon}],
$$ 
where $ R^j =  \left|e_{j+n}^\top \trajreal_{\statee_0} - \mathsf{F}^j(\statee_0)\right|$. Applying Lemma \ref{lem:maxcontribution} results in
$
\Pr\left[  \bigwedge_{j=1}^{n\horizon} \left(R^j \leq R^{*}_{\delta,\tau}/\alpha_j \right) \right] \geq \delta.
$
% \begin{equation}\label{eq:0}
% \Pr\left[  \bigwedge_{j=1}^{n\horizon} \left(R^j \leq R^{*}/\alpha_j \right) \right] \geq \delta.
% \end{equation}
We rephrase this  as 
\[
\Pr\left[  \bigwedge_{j=1}^{n\horizon} \left( \mid e_{j+n}^\top \trajreal_{\statee_0} - \mathsf{F}^j(\statee_0) \mid \leq  R^{*}_{\delta,\tau}/\alpha_j \right) \right] \geq  \delta.
\]
% $$
% \Pr\left[  \bigwedge_{j=1}^{n\horizon} \left( \mid e_{j}^\top \traj_{\statee_0} - \mathsf{F}^j(\statee_0) \mid \leq  R^{*}/\alpha_j \right) \right] \geq  \Delta.
% $$
Next, we define the interval $C_j(\statee_0)$ as 
$$
C_j(\statee_0) := \left[ \mathsf{F}^j(\statee_0)  -R_{\delta,\tau}^{*}/\alpha_j\ ,\  \mathsf{F}^j(\statee_0)  +  R_{\delta,\tau}^{*}/\alpha_j \right]
$$ 
and accordingly obtain the guarantee that
\[
\Pr\left[  \bigwedge_{j=1}^{n\horizon} \left( e_{j+n}^\top \trajreal_{\statee_0} \in C_j(\statee_0) \right) \right]  \geq  \delta.
\]
% $$
% \Pr\left[  \bigwedge_{j=1}^{n\horizon} \left( e_{j}^\top \traj_{\statee_0} \in C_j(\statee_0) \right) \right]  \geq  \Delta
% $$
Based on this representation, we can now see that
\begin{equation} \label{eq:1}
\Pr\left[  \trajreal_{\statee_0} \in \zon\left([\statee_0^\top,\overallf(\statee_0\ ;\theta)] , \mathbf{diag}\left([0_{1\times n},\ e]\right)\right) \right]\geq \delta.
\end{equation}
Finally, since $\Pr[\statee_0 \notin \init] =0$ and $\bar{X}$ is a surrogate flowpipe for the surrogate model $\overallf$ on $\init$, i.e., $\statee_0 \in \init$ implies $[\statee_0^\top,\overallf(s_0\ ; \theta)] \in \bar{X}$, we can conclude,
\begin{equation}\label{eq:2}
\begin{aligned}
&\zon\left([\statee_0^\top,\overallf(\statee_0\ ;\theta)] , \mathbf{diag}\left([0_{1\times n},\ e]\right)\right)\\
&\hspace{2mm} \subset \bar{X} \oplus \zon\left(0 ,\mathbf{diag}\left([0_{1\times n},\ e]\right)\right) = X
\end{aligned}
\end{equation}
% \begin{equation}\label{eq:2}
% \zon\left(\overallf(\statee_0\ ; \theta) , \mathbf{diag}\left([0_{1\times n},\ e]\right)\right) \subset \bar{X} \oplus \zon\left(0 , \mathbf{diag}\left([0_{1\times n},\ e]\right)\right) = X
% \end{equation}
Consequently, we know that $\Pr[\trajreal_{\statee_0} \in X  ] \geq \delta$ holds.
\end{proof}
We note that the surrogate reachability, and also use of the Minkowski sum in the reachability analysis, results in some level of
conservatism.

% Consider the trajectory $\traj_{\statee_0}$ and its corresponding residual as $R$ in the mentioned proof for Theorem \ref{lem:inclusion_conformal}. In case, we utilize exact-star reachability analysis then $\traj_{\statee_0} \in X \iff R< R^*$. Therefore, the Minkowski sum  imposes no conservatism on the reachability analysis and doesn't affect  $\mathbf{Beta}$ distribution for possible deviations of marginal guarantee. 
% \begin{remark}
%     In this research, we do not analyze the impact of reachability analysis on the $\mathbf{Beta}$ distribution for $\Pr[R<R^*]$. However, the minkowski sum in Theorem \ref{lem:inclusion_conformal} makes it certain that for all $\delta \sim \mathbf{Beta}$ the confidency of reachset is greater than $\Delta=\delta$. Thus, even if it increases the variance of the  putterbation for $\del$, it will not negatively impact our provable probabilistic guarantee. 
% \end{remark}

\begin{remark}
We note that we can even compute the minimum size of the calibration
dataset required to achieve a desired confidence probability $\delta\in (0,1)$.
Robust conformal inference \cite{cauchois2020robust} imposes two constraints in this regard. The first
constraint specifies  a relation between the adjusted miscoverage level
$\bar{\epsilon}$ and the size of the calibration dataset as $\lceil
(L+1)(1-\bar{\epsilon}) \rceil \leq L$. The second constraint is that the ranges of $g_{f,\tau}$ and $g_{f,\tau}^{-1}$ have to be within $[0,1]$. Thus, we can impose
$(1+1/L)g_{f,\tau}^{-1}(\delta)<1$, or in other words $L> \lceil
g_{f,\tau}^{-1}(\delta) / (1- g_{f,\tau}^{-1}(\delta) ) \rceil$.
\end{remark}

\mypara{Tightening the surface area of the flowpipe}
The scaling factors $\alpha_j$ are trained to minimize the sum of errors over the
trajectory components, see equation \eqref{eq:L2}. The expression $R^*_{\delta, \tau} \sum_{j=1}^{n\horizon}1/\alpha_j$ arising from \eqref{eq:L2} can also be interpreted as the surface area of the inflating zonotope, see Definition \ref{def:in_z}. We now show how we can update scaling factors after training to reduce the surface area to tighten the $\delta$-confident flowpipe further.  Let us  sample a new trajectory dataset $\lpdataset$ 
% \footnote{We cannot utilize the calibration dataset defined before for this purpose as it makes the residual's definition dependent on the calibration dataset breaking the exchangeability assumption required for conformal inference.}.
and compute the prediction errors $R_i^j$ and residuals $R_i$ for $i\in[|\lpdataset|]$, and also their conformalized robust $\bar{\delta}$-quantile $R_{\delta,\tau}^*$, using the trained scaling factors $\alpha_j$ and surrogate model. 

The main idea for an efficient update of the trained scaling factors is as follows. Assume $\alpha^{'}_j$ is the updated version of $\alpha_j$. If this update is such that the updated trajectory residuals $ \max(\alpha^{'}_1 R_1^i, \cdots, \alpha^{'}_{n\horizon} R_{n\horizon}^i), i \in [|\lpdataset|]$ are the same as the trajectory residuals $R_i$ under $\alpha_j$, then $R_{\delta,\tau}^*$ under the updated $\alpha^{'}_j$ remains the same. By defining $\omega^{'}_j = 1/\alpha^{'}_j$, we see that the surface area  $R^*_{\delta, \tau} \sum_{j=1}^{n\horizon} \omega^{'}_j$ of the inflating zonotope depends linearly on $\omega^{'}_j$. On the other hand the constraint $R_i = \max \left(  R_i^1/\omega_1^{'}, \cdots, R_i^{n\horizon}/\omega_{n\horizon}^{'}\right),$ is a linear constraint. This constraint can be equivalently represented as
\begin{equation}
\forall i\in[|\lpdataset|], j\in[n\horizon] \ R_i \omega_j^{'} \geq R_i^j\nonumber
\end{equation}
under the additional assumption that the updated scaling factors $\omega^{'}_j$ are minimized.
This means an efficient update on scaling factors to reduce the surface area can be done via  linear programming with decision variables $\omega^{'}_j , j \in[n\horizon]$, i.e.,
\begin{equation}\label{eq:alphoptim}
\textbf{minimize}\ \sum_{j=1}^{n\horizon} \omega_j^{'}\ \  \textbf{s.t. }\ \  \forall i\in[|\lpdataset|], j\in[n\horizon] \ \omega_j^{'} \geq R_i^j/R_i,
\end{equation}
which has the analytical solution $\omega^{'}_j = \max_{i}\left[R_i^j/R_i\right]$.
% This implies our guarantee for the flowpipe is still valid. Therefore, we update the scaling factors, $\alpha_j = 1/ \omega_j, j \in [n\horizon]$ while the following relation,
% \begin{equation}\label{eq:previous}
% \forall i \in [|\lpdataset|],\ \  R_i = \max \left(  R_i^1/\omega_1^{'}, \cdots, R_i^{n\horizon}/\omega_{n\horizon}^{'}\right),
% \end{equation}
%  still holds, where $\omega_j^{'}$ is the updated version of $\omega_j$. This provides an opportunity to introduce a linear program with decision variables $\omega_j^{'}, j \in[n\horizon]$, to tighten the surface area of the zonotope as follows:
% \begin{equation}\label{eq:alphoptim}
% \left\{
% \begin{aligned}
% & \textbf{minimize}\ R_{\delta,\tau}^* \sum_{j=1}^{n\horizon} \omega_j^{'}\\
% & \quad \textbf{s.t.}\quad \forall i\in[|\lpdataset|], j\in[n\horizon] \ R_i \omega_j^{'} \geq R_i^j.
% \end{aligned}
% \right.
% \end{equation}
% This linear program captures the fact that the surface area of inflating
% zonotope is $\sum_{j=1}^{n\horizon} R_{\delta,\tau}^* \omega_j$ and for a given $i
% \in [|\lpdataset|]$ the relation in \eqref{eq:previous} implies that
% $\forall j\in[n\horizon] \ R_i \omega_j \geq R_i^j$\footnote{In case we also admit
% $R_{\delta,\tau}^*$ and $R_i, i\in[|\lpdataset|]$ as decision variables then,
% this optimization is neither linear nor convex and therefore, does not scale
% even for sampling a small set of data-points $R_i^j,
% i\in[|\lpdataset|],j\in[n\horizon]$.}.}
\section{Experimental Results}
\label{sec:results}
\navid{To mimic real-world systems that can produce actual
trajectory data, we use stochastic difference equation-based models derived from
dynamical system models. In these difference equations, we assume additive
Gaussian noise that models uncertainty in observation, dynamics, or even
modeling errors}. 

% To match the theory for discrete-time stochastic systems presented in this paper, we only evaluate the SDE at discrete time points.

\navid{Our theoretical guarantees depend on knowledge of the distribution shift $\tau$. In practice, however, $\tau$ is usually not known {\em a priori} but can be estimated from the data.  For the purpose of providing an empirical examination of our results, we fix $\tau$ {\em a priori} to compute the $\delta$-confident flowpipe and construct a system $\dist$ from $\distzero$ by varying system parameters such that $\distR \in \mathcal{P}_{f,\tau} (\distzeroR)$. } We ensure that this holds by estimating the distribution shift, denoted by $\tilde{\tau}$, as the $f$-divergence between $\distzeroR$ and $\distR$ and by making sure that $\tilde{\tau} \leq \tau$. In our experiments, we used the total variation distance for $f$, and used $3\times10^5$ trajectories to estimate $\tilde{\tau}$.

% In fact, from data processing inequality we know that we only need to make sure that $\distR \in \mathcal{P}_{f,\tau} (\distzeroR)$, where we recall that $\distzeroR$ and $\distR$ are the residual distributions of the simulated and the shifted system, respectively. 

\navid{We use ReLU activation functions in our surrogate NN-based models
motivated by recent advances in NN verification with ReLU activations. We
specifically use the NNV toolbox from \cite{tran2020nnv} for reachability
analysis of the surrogate model. While other activation functions could be used,
we expect more conservative results in case we utilize non-ReLU activation functions. The approach in
\cite{tran2020nnv} uses star-sets (an extension of zonotopes) to represent the
reachable set and employs two main methods: (1) the exact-star method that
performs exact but slow computations, (2) the approx-star method that is
conservative but faster. To mitigate the runtime of the exact-star technique and
the conservatism of the approx-star technique, set partitioning can be utilized
\cite{tran2019safety}, where initial states are partitioned into
sub-regions and reachability is done on each sub-region in
parallel.}

% \navid{We strongly suggest utilizing ReLU activation function for training the surrogate model. The recent achievements in the literature \cite{tran2020nnv} have provided low conservative results for ReLU-NN reachability analysis. Motivated with these achievements, we restricted our toolbox only to ReLU neural networks. In case we utilize other activation functions, we expect more conservative results, as the surrogate reachability may not scale well to the surrogate models we utilize in this section. The approach in \cite{tran2020nnv} uses star-sets (an extension of zonotopes) to represent the set of states reached at
% the output of each layer of the $\relu$-NN. The set of initial states (assumed to be some polyhedron) is input to the $\relu$-NN, and the output of the final layer represents the image of the set of initial states w.r.t. the $\relu$-NN. There are two main methods for doing this: (1) the exact-star method that performs precise computations but can be slow, (2) the approx-star method that over-approximates the flowpipe and is faster. To mitigate the runtime of the exact-star technique and the conservatism of the approx-star technique, set partitioning can be utilized \cite{tran2019safety}, where the initial states are partitioned into sub-regions and reachability analysis is conducted on each sub region in parallel.}

As per Theorem \ref{lem:inclusion_conformal}, our results are guaranteed to be valid with a confidence of $\delta$. To
determine how tight this bound is, we will  empirically examine the computed probabilistic flowpipes. We do so by sampling i.i.d. trajectories from $\dist$ \footnote{\navid{We use trajectories close to the worst case where $\tilde{\tau}$ is close to $\tau$.}} and computing the ratio of the trajectories that are included in the probabilistic flowpipes, which we denote by $\tilde{\Delta}$.
% The surrogate flowpipes (in turn used to construct probabilistic flowpipes per Theorem \ref{lem:inclusion_conformal}) are computed using the NNV toolbox \cite{tran2020nnv}. 
Additionally, to check the coverage guarantee $\delta$ for $R_{\delta,\tau}^*$ directly,
we also report the ratio of the trajectories that provide a residual less
than $R_{\delta,\tau}^*$, which we denote with $\tilde{\delta}$. We emphasize that
$\tilde{\Delta}$ and $\tilde{\delta}$ are both expected to be greater than
$\delta$.

% \footnote{We have mentioned in the preliminaries that the confidence level, $\delta$ proposed with conformal inference is subject to small perturbations ($\mathbf{Beta}$-distribution), regarding the calibration dataset that we sample. This perturbation can be neglected by incorporating a large enough calibration dataset. It is important to note that unlike the perturbations for $\delta$ which follows the $\mathbf{Beta}$ distribution, $\tilde{\Delta}$ may not, even if $\tilde{\Delta}$ is computed accurately. However, the theory of robust conformal inference makes it certain that the accurate estimation for $\tilde{\Delta}$ satisfies $\tilde{\Delta}>\delta$. This implies, the perturbation distribution for the accurate estimation of $\tilde{\Delta}$ can not violate our proposed guarantee for the confidence of flowpipe.}

% \footnote{Since the examination addresses a specific example of $\dist \in \mathcal{P}_{\tau,f} (\distzero)$, then $\tilde{\delta}$ (regarding the selection of calibration dataset) may not follow the $\mathbf{Beta}$ distribution even if it is estimated accurately. However, the theory of robust conformal inference makes it certain that the accurate estimation for $\tilde{\delta}$ satisfies $\tilde{\delta}>\delta$. This implies, the perturbation distribution for the accurate estimation of $\tilde{\delta}$ can not violate our proposed guarantee for the confidence of $R^*$.}

In the remainder, we first present a case study to compare between reachability with surrogate models using the mean square error (MSE) and our proposed quantile loss function in \eqref{eq:mainloss}. We show that the quantile loss function results in tighter probabilistic flowpipes. After that, we present several case studies on a $12$-dimensional quadcopter and the time reversed van Der Pol dynamics. The results are also summarized in Table~\ref{tbl}. \navid{We visualize our flowpipes by their two-dimensional projection. Therefore, in case a trajectory is included in all the visualized bounds, it does not necessary mean the trajectory is covered. We instead, determine the inclusion of traces in our star-sets using the NNV toolbox which determines set inclusion by solving a linear programming feasibility problem.}

\mypara{Comparison between MSE and Quantile minimization}
\myipara{Experiment 1} Our first experiment will show the advantage of training a surrogate model with quantile loss function compared to training a surrogate model using the MSE loss function. 
% \footnote{ The prediction error is generated from two different sources. The first source is the inaccuracy of the trained model and the second source is the inherent stochasticity in the dataset. Although the latter is not trainable we can reduce the impact of the former source by increasing the quality of training. In case the dataset is generated from a straight forward dynamics, like a linear model, then the MSE can provide prediction errors with very small variance  (regarding the former source). This implies the average and $\delta$-quantile of prediction error (from the former source) are close enough and leaves no room to show the contribution of training with quantile minimization.}
Therefore, we model $\distzero$ as the non-linear system 
\[
\begin{aligned}
&x_{\timeid+1} = 0.985y_{\timeid}+\sin(0.5x_{\timeid})-0.6\sin(x_{\timeid}+y_{\timeid})-0.07+0.01v_1\\
&y_{\timeid+1} = 0.985x_{\timeid}+\cos(0.5y_{\timeid})-0.6\cos(x_{\timeid}+y_{\timeid})-0.07+0.01v_2
\end{aligned}
\]
that generates a periodic motion. Here, $v_1$ and $v_2$ denote random variables sampled from a normal distribution. In this experiment, we do not consider a shifted stochastic system $\dist$, and instead sample trajectories from $\distzero$ for comparison of our two surrogate models. The first surrogate model is trained as proposed in Section \ref{sec:surrogate} using  quantile minimization, while the other surrogate model is trained with the MSE loss function. Our results are shown upfront in  Figure \ref{fig:MSEquantile} where we compare the probabilistic reachable sets of these two models. 

In more detail, recall that the scaling factors $\alpha_1,\hdots,\alpha_{nK}$ of our proposed method in  Section \ref{sec:surrogate} are jointly trained with the surrogate model. However, since we do not train these scaling factors jointly when we use the MSE loss function, we instead compute them beforehand following \cite{zhao2023robust}. In other words, we normalize the component-wise residuals as
\[
\alpha_j = 1/ \omega_j \text{ where } \omega_j = \max \left( R_1^j,\ R_2^j,\ \hdots,\  R_{|\traindataset|}^j \right) .
\]
for each $j\in[n\horizon]$. We utilized $|\traindataset| = 10^5$ random trajectories with $\horizon = 50$ for training the surrogate model.  The initial states were uniformly sampled from the set of initial states $\init_1 = [-0.5, 0.5] \times [-0.5, 0.5]$.  In both case, we trained a $\relu$ surrogate model with structure $[2,\ 20,\ 50,\ 90,\ 100]$ and we applied approx-star from the NNV \cite{tran2020nnv} toolbox for the reachability analysis. To lower the conservatism of approx-star, we partition the set of initial states into $400$ partitions, and perform the surrogate reachability analysis for every partition separately. The flowpipe is also computed for the confidence level of $ \delta \geq 95\%$. The details of the experiment via quantile minimization are also provided in Table \ref{tbl}. 

\navid{We additionally compare the surface area $R^*_{\delta, \tau} \sum_{j=1}^{n\horizon} 1/\alpha_j$ of the inflating zonotopes, see Definition \ref{def:in_z},  for both surrogate models. \navid{Note that this surface area is the $\loss_2$ loss in equation \eqref{eq:L2} when $q=R^*_{\delta, \tau}$, which we enforce during training. The $\bar{\delta}$-quantile of $\mathbf{UB}_i$ as defined in \eqref{eq:UB} is the $\loss_2$ loss, and hence approximates the surface area of the inflating zonotope. To compare the distributions of $\mathbf{UB}_i$,}
% Note that the $95\%$-quantile of the surface area is approximated by the loss function $\loss_2$, for which we compute an upper bound, denoted by $\mathbf{UB}$, as per equation ~\eqref{eq:UB}.
 we simulate $3\times 10^5$ trajectories and compute $\mathbf{UB}_i/(n\horizon)$ for both the MSE  and the quantile loss-based NNs. We present the histograms of $\mathbf{UB}_i/(n\horizon)$ for both loss functions in Figure \ref{fig:MSEquantile_error} where we see that the quantile of $\mathbf{UB}_i$  for MSE is larger. This emphasizes the advantage of training via quantile loss function.}

% \begin{figure}[t]
%     \centering
%     \begin{subfigure}{\linewidth}
%     \centering
%         \includegraphics[width = 0.7\linewidth]{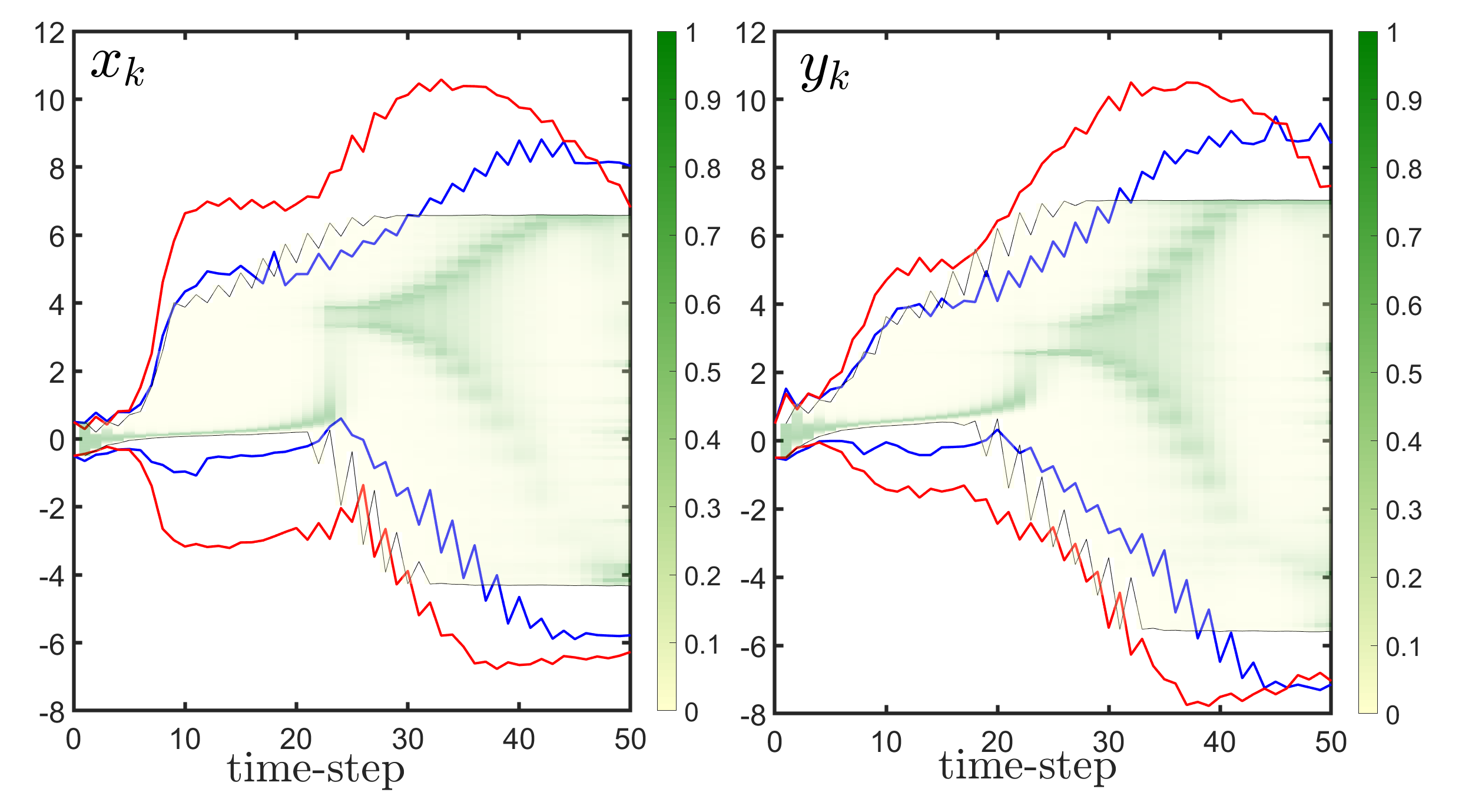}
%         \captionsetup{skip=0pt}
%         \caption{Flowpipe for $x_\timeid$ and $y_\timeid$ over time steps. The red borders are for flowpipes generated by MSE loss function and the blue ones are for quantile based loss function. The shaded region shows an approximation of flowpipe by recording trajectories, and the darkness of the green color shows the density of the trajectories. The black lines are the borders for the shaded region. The shaded area is generated via $300000$ trajectories.} 
%     \label{fig:MSEquantile}
%     \end{subfigure}
%     % \vfill
%     \begin{subfigure}{\linewidth}
%     \centering
%         \includegraphics[width = 0.55\linewidth]{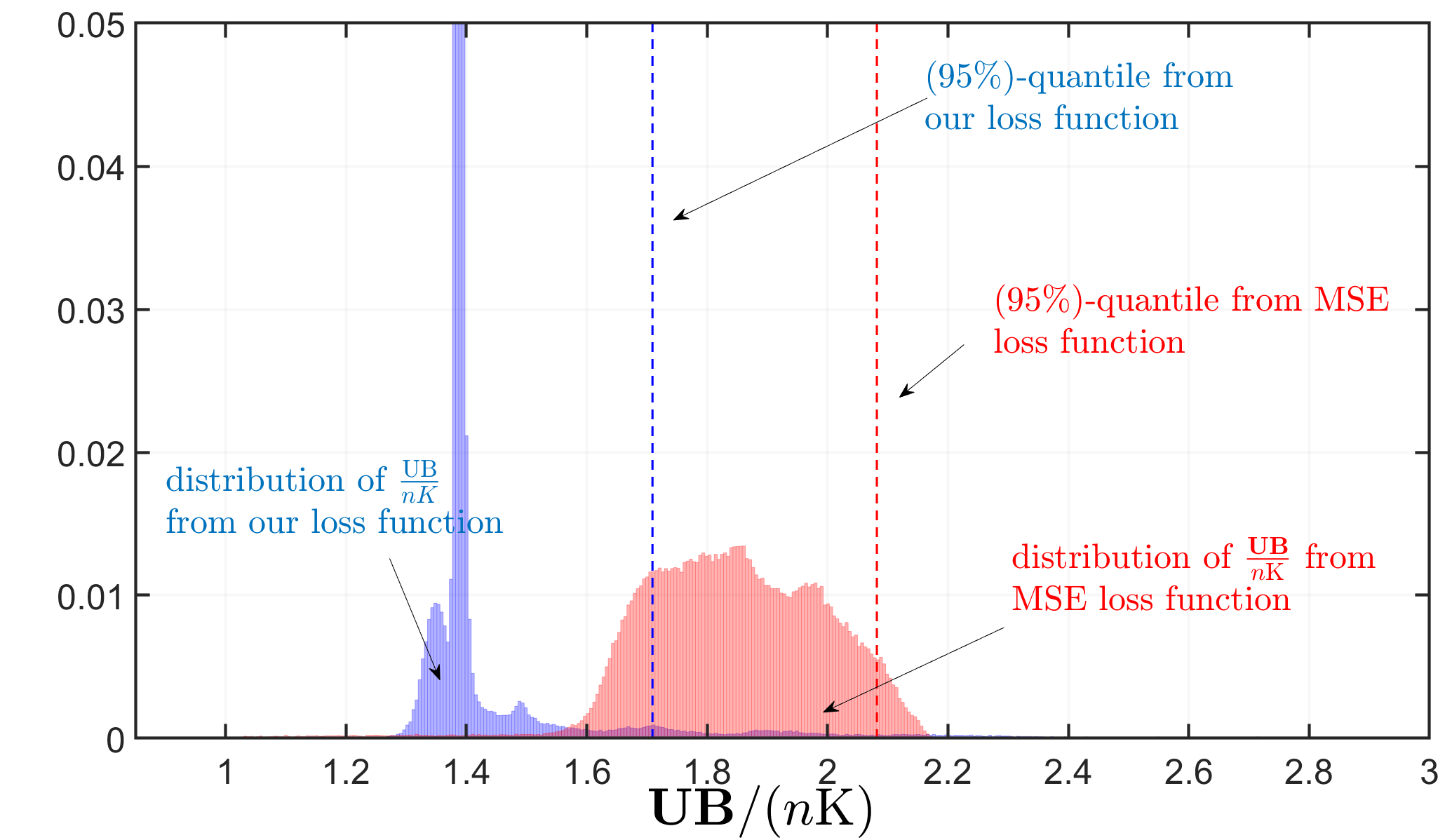}
%         \captionsetup{skip=0pt}
%         \caption{Distribution of $\mathbf{UB}/(n\horizon)$ for the MSE and the quantile-based NNs for $3\times 10^5$ samples. The $95\%$-quantile of variable $\mathbf{UB}/(n\horizon)$ represents the surface area of the obtained inflating zonotope. The figure is cropped for better visibility.}  
%     \label{fig:MSEquantile_error}
%     \end{subfigure}
%     \captionsetup{skip=0mm} \caption{Figures (a) and (b) show a comparison between flowpipes and distributions of $\mathbf{UB}/(n\mathrm{K})$ respectively for training via MSE and training via our proposed loss function \eqref{eq:mainloss}.}
% \end{figure}

\begin{figure}[t]
    \centering
        \includegraphics[width = 0.7\linewidth]{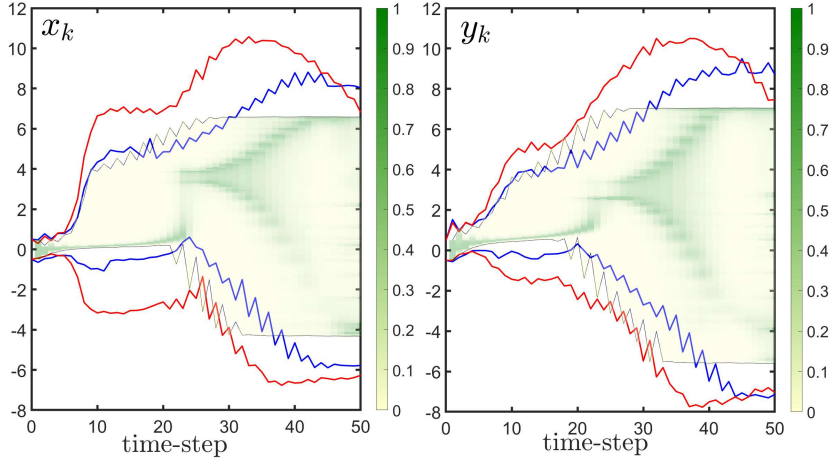}
        \captionsetup{skip=0pt}
        \caption{Flowpipe for $x_\timeid$ and $y_\timeid$ over time steps. The red borders are for flowpipes generated by MSE loss function and the blue ones are for quantile based loss function. The shaded region shows an approximation of flowpipe by recording trajectories, and the darkness of the green color shows the density of the trajectories. The black lines are the borders for the shaded region. The shaded area is generated via $300000$ trajectories.} 
    \label{fig:MSEquantile}
\end{figure}
    % \vfill
\begin{figure}[t]
    \centering
        \includegraphics[width = 0.55\linewidth]{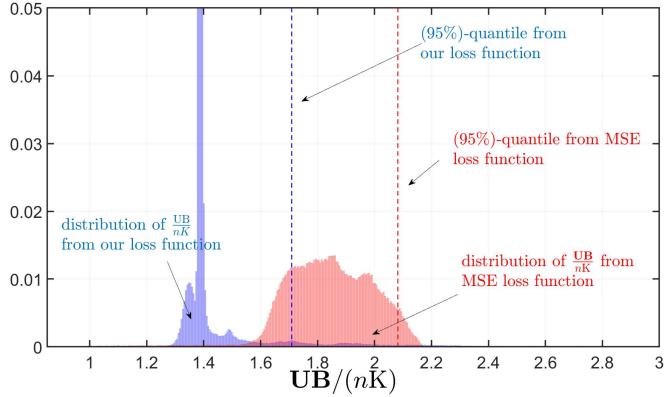}
        \captionsetup{skip=0pt}
        \caption{Distribution of $\mathbf{UB}/(n\horizon)$ for the MSE and the quantile-based NNs for $3\times 10^5$ samples. The $95\%$-quantile of variable $\mathbf{UB}/(n\horizon)$ represents the surface area of the obtained inflating zonotope. The figure is cropped for better visibility.}  
    \label{fig:MSEquantile_error}
    % \captionsetup{skip=0mm} \caption{Figures (a) and (b) show a comparison between flowpipes and distributions of $\mathbf{UB}/(n\mathrm{K})$ respectively for training via MSE and training via our proposed loss function \eqref{eq:mainloss}.}
\end{figure}

\begin{table*}
\hspace{-15mm}
\resizebox{0.75\textwidth}{!}{
\begin{tabular*}{\textwidth}{@{\extracolsep{\fill}}lcc||ccccc}
\cmidrule{1-8}
\muc{3}{Specification} & \muc{5}{Reachability Analysis with Robust CI}\\
\cmidrule{1-3}\cmidrule{4-8}
    Experiment $\#$: & Confidency    & Maximum      &   Number of    &  Conformal & Reachability    & Overal    & Size of   \\
    Underlying   & of flowpipe,  & distribution &   Star-sets    &  inference      & technique & reachability & calibration\\
% \cmidrule{8-9}\cmidrule{10-11}
     dynamics    &  i.e. $\delta$& shift's radius   &    from NNV     &  runtime($\sec$) & & runtime($\sec$)  & dataset ($|\calibdataset|$) \\
\cmidrule{1-8}
1:\ Periodic    & $\delta = 95\%$   & $0$    & $400$ & $0.0892$  & approx-star &$22.5233$ & $10,000$ \\
2:\ Quadcopter  & $\delta = 99.99\%$& $0$    & $64$  & $1.4299$  & approx-star & $ 160.0486$ & $20,000$ \\
3:\ Quadcopter  & $\delta = 80\%$   & $0.15$ & $64$  & $0.4971$  & approx-star & $ 148.9815$ & $10,000$ \\
4:\ Quadcopter  & $\delta = 70\%$   & $0.25$ & $64$  & $0.4971$  & approx-star & $ 148.9815$ & $10,000$ \\
% 4:\ Quadcopter  & $\delta = \%71$   & $0.24$ & $64$    & $1.4393$  & approx-star & $154.6004$ & $48.0056$ \\
% 5:\ TRVDP & $\delta = 89.5\%$   & $0.1$& $27$  & $0.0977$  & exact-star & $ 1.0277$ & $10,000$\\
5:\ TRVDP & $\delta = 99.99\%$& $0$    & $1$   & $4.8218$  & exact-star & $ 1.5761$ & $30,000$ \\
6:\ TRVDP & $\delta = 77\%$   & $0.225$& $1$   & $0.2876$  & exact-star & $ 0.1404$ & $10,000$ \\
\cmidrule{1-8}
\end{tabular*}}\\\\\\
\hspace{-5mm}
\resizebox{0.80\textwidth}{!}{
\begin{tabular*}{\textwidth}{@{\extracolsep{\fill}}lcccc}
\cmidrule{1-5}
& \muc{4}{Training} \\
\cmidrule{2-5}
         &  Training  & Size of training   & Linear programming &   Size of dataset  for \\
         &  runtime   & dataset ($|\traindataset|$)    &  runtime           &   linear programming ($|\lpdataset|$) \\
\cmidrule{1-5}
Experiment 1:   & $41$ minutes    & $100,000$ & $1.7422$ \ seconds  & $10,000$\\
Experiment 2:   & $124$ minutes   & $40,000$  & $29.3255$ seconds & $2,000$\\
Experiment 3,4: & $112$ minutes   & $40,000$  & $21.1406$ seconds & $2,000$\\
% Experiment 5:   & $46$  minutes   & $100,000$  & $2.0934$ \ seconds  & $10,000$\\
Experiment 5:   & $25$  minutes   & $40,000$  & $6.3559$ \ seconds  & $50,000$\\
Experiment 6:   & $27$  minutes   & $40,000$  & $2.8236$ \ seconds  & $10,000$\\
\cmidrule{1-5}
\end{tabular*}}
\vspace{5mm}
\centering
\hspace{-5mm}
\resizebox{0.80\textwidth}{!}{
\begin{tabular*}{\textwidth}{@{\extracolsep{\fill}}lccc|cc}
\cmidrule{1-6}
 & \muc{5}{Examination} \\
\cmidrule{2-6}
    &  Example of induced  &  \muc{2}{Coverage Estimation (i.e. $\tilde{\Delta}$) } & \muc{2}{Coverage Estimation (i.e. $\tilde{\delta}$)} \\
     &  distribution shift's&  \muc{2}{for flowpipe generated by:}  & \muc{2}{for $R_{\delta,\tau}^*$  generated by:} \\
% \cmidrule{8-9}\cmidrule{10-11}
       &  radius (i.e. $\tilde{\tau})$ &  Robust CI & Vanilla CI          &  Robust CI & Vanilla CI          \\
\cmidrule{1-6}
Experiment 1:  & $0$     & $ 96.31\%$& $96.31\%$& $ 95.05\%$ & $95.05\%$\\
Experiment 2:  & $0$     & $100\%$& $100\%$& $99.99\%$ & $99.99\%$\\
Experiment 3:  & $0.1445$& $100\%$& $100\%$& $88.58\%$ & $70.86\%$ \\
Experiment 4:  & $0.2395$& $100\%$& $100\%$& $80.50\%$ & $49.64\%$ \\
% Experiment 5:  & $0.0838$& $96.86\%$& $90.71\%$& $94.61\%$ & $85.05\%$ \\
Experiment 5:  & $0$     & $99.99\%$& $99.99\%$& $99.99\%$ & $99.99\%$\\
Experiment 6:  & $0.2085$& $95.91\%$& $56.52\%$& $95.87\%$ & $55.73\%$ \\
\cmidrule{1-6}
\end{tabular*}}
\caption{Shows the detail of our computation process to provide probabilistically guaranteed flowpipes. \navid{ The time horizon for experiments 1,5,6 is $\horizon=50$ time-steps and for the experiments 2,3,4 is $\horizon = 100$ time-steps. The sampling time for quadcopter and TRVDP are $0.05$ and $0.02$ seconds, respectively.} We examine the results with a valid distribution shift (explained in detail in Table \ref{tbl:extended}) that is less than the maximum specified distribution shift in terms of total variation. This shift is estimated through the comparison between $300,000$ trajectories from $\dist$ and $\distzero$. We also utilize $10,000$ trajectories (number of trials) from this specific distribution $\dist$ to examine the coverage of flowpipes and $300,000$ trajectories for examination of the coverage level for $R_{\delta, \tau}^*$ (i.e.$ \tilde{\Delta},\ \tilde{\delta}$). To evaluate the contribution of robust conformal inference, we also solve for the flowpipes again neglecting the distribution shift, i.e. $\bar{\epsilon} = \epsilon$, and show the coverage guarantee for $R_{\delta, \tau}^*$ and flowpipes may get violated, ($\tilde{\delta}< \delta$ or $\tilde{\Delta}< \delta$), in case the shifted distribution (deployment distribution) is considered. \navid{The runtimes we report for reachability assumes no parallel computing.}}
\label{tbl}
\end{table*}
\newcommand{\uni}[1]{\mathsf{uni}(#1)}
\begin{table}[t]
\centering
\resizebox{0.86\textwidth}{!}{
\begin{tabular*}{\linewidth}{@{\extracolsep{\fill}}lccl}
\toprule
& Experiment. & Distribution.  &  \muc{1}{$\Sigma$ for added noise} \\ 
& & & \muc{1}{Gaussian $\gaussian(\mathbf{0}, \Sigma)$} \\
\midrule
\multirow{6}{3.5cm}{$\distzero$} &
1 & $\uni{\init_1}$ & $\mathbf{diag}([0.01, 0.01])^2$\\
& 2 & $\uni{\init_2}$ & $\mathbf{diag}([0.05\cdot\vec{1}_{1 \times 6},0.01\cdot\vec{1}_{1 \times 6}])^2$\\
& 3 & $\uni{\init_2}$ & $\mathbf{diag}([0.05\cdot \vec{1}_{1 \times 6},0.01\cdot \vec{1}_{1 \times 6}])^2$ \\
& 4 & $\uni{\init_2}$ & $\mathbf{diag}([0.05\cdot \vec{1}_{1 \times 6},0.01\cdot \vec{1}_{1 \times 6}])^2$ \\
& 5,6 & $\uni{\init_3}$ & $\mathbf{diag}([0.1,0.1])^2$ \\
\midrule
\multirow{4}{3.5cm}{$\dist \in \mathcal{P}_{\tau,f} (\distzero)$} &
3 &  $\uni{\init_2}$ & $\Sigma \times 1.8$ \\
& 4 & $\uni{\init_2}$ & $\Sigma \times 2.2$ \\
& 6 & $\uni{\init_3}$ & $\mathbf{diag}([0.1378,0.1378])^2$ \\
\bottomrule
\end{tabular*}}
\caption{Initial state distribution and added Gaussian noise (mean:$0$, covariance:$\Sigma$)  for the training and the shifted environments; $\uni{\init}$ denotes the uniform distribution over $\init$.}
\label{tbl:extended}
\end{table}

\begin{figure}[t]
\hspace{-8mm}
\includegraphics[width =1.08\linewidth]{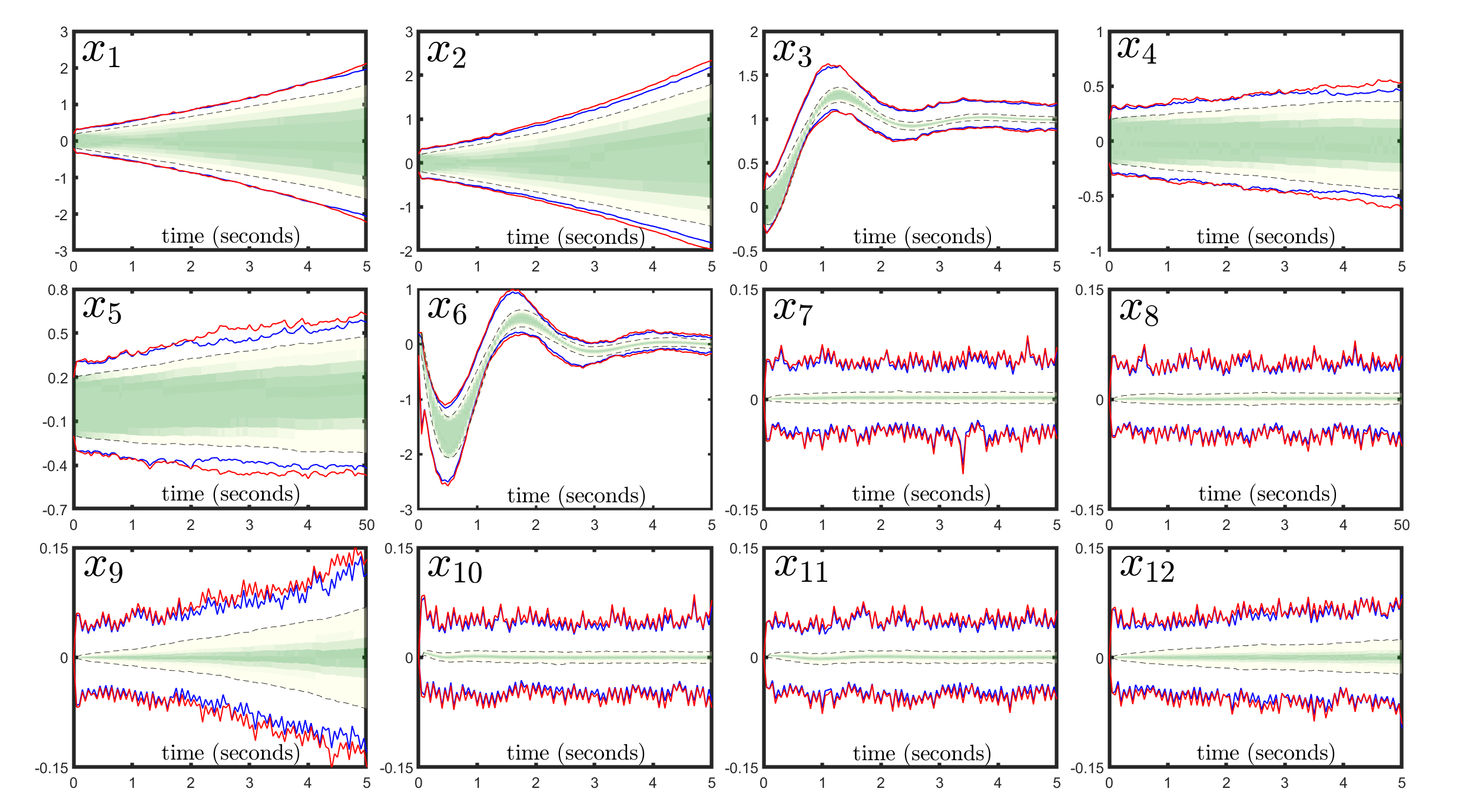}
\caption{This figure shows the proposed flowpipes computed for the quadcopter dynamics for each state component over the time horizon of $100$ time steps with $\delta t = 0.05$ that means $5$ seconds operation of quadcopter. The red borders show the flowpipe that contains trajectories from $\distzero$ with provable coverage of $\delta \geq  99.99\%$. The green shaded area shows the density of a collection of $300,000$ of these trajectories, and the darker color means the higher density of traces. The blue borders are also for a flowpipe that contains the trajectories from distribution $\distzero$ with $ \delta \geq 95\%$. The dotted black line also shows the border of collected simulated trajectories.}
\label{fig:quad_comparison}
\end{figure}

\begin{figure*}
    \centering
    \begin{subfigure}{0.42\textwidth}
        \includegraphics[width=\linewidth]{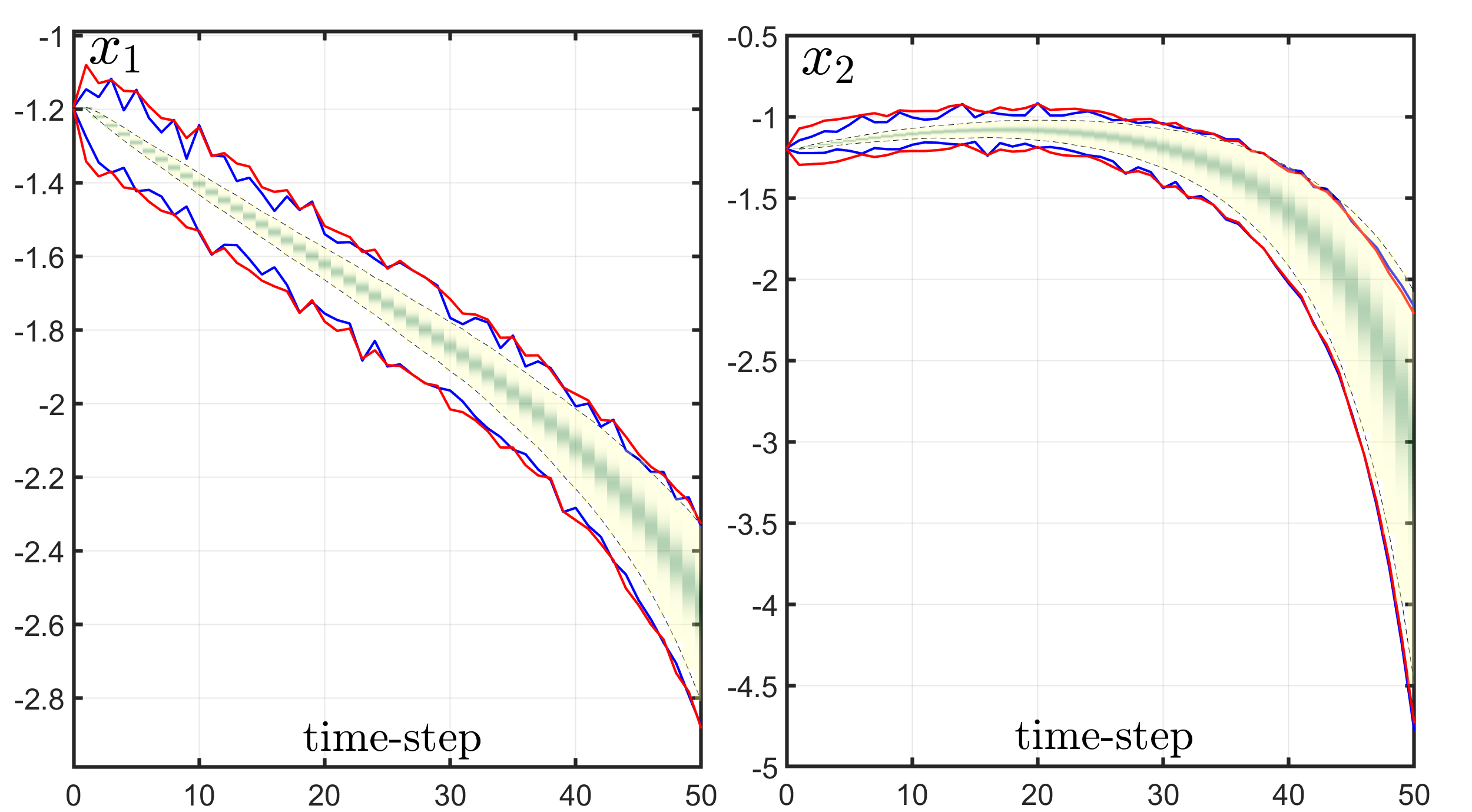}
        \caption{ $(\tau,\ \delta) = (0,\ 0.9999)$  }
        \label{fig:vdp1}
    \end{subfigure}
    \hfill
    \begin{subfigure}{0.415\textwidth}
        \includegraphics[width=\linewidth]{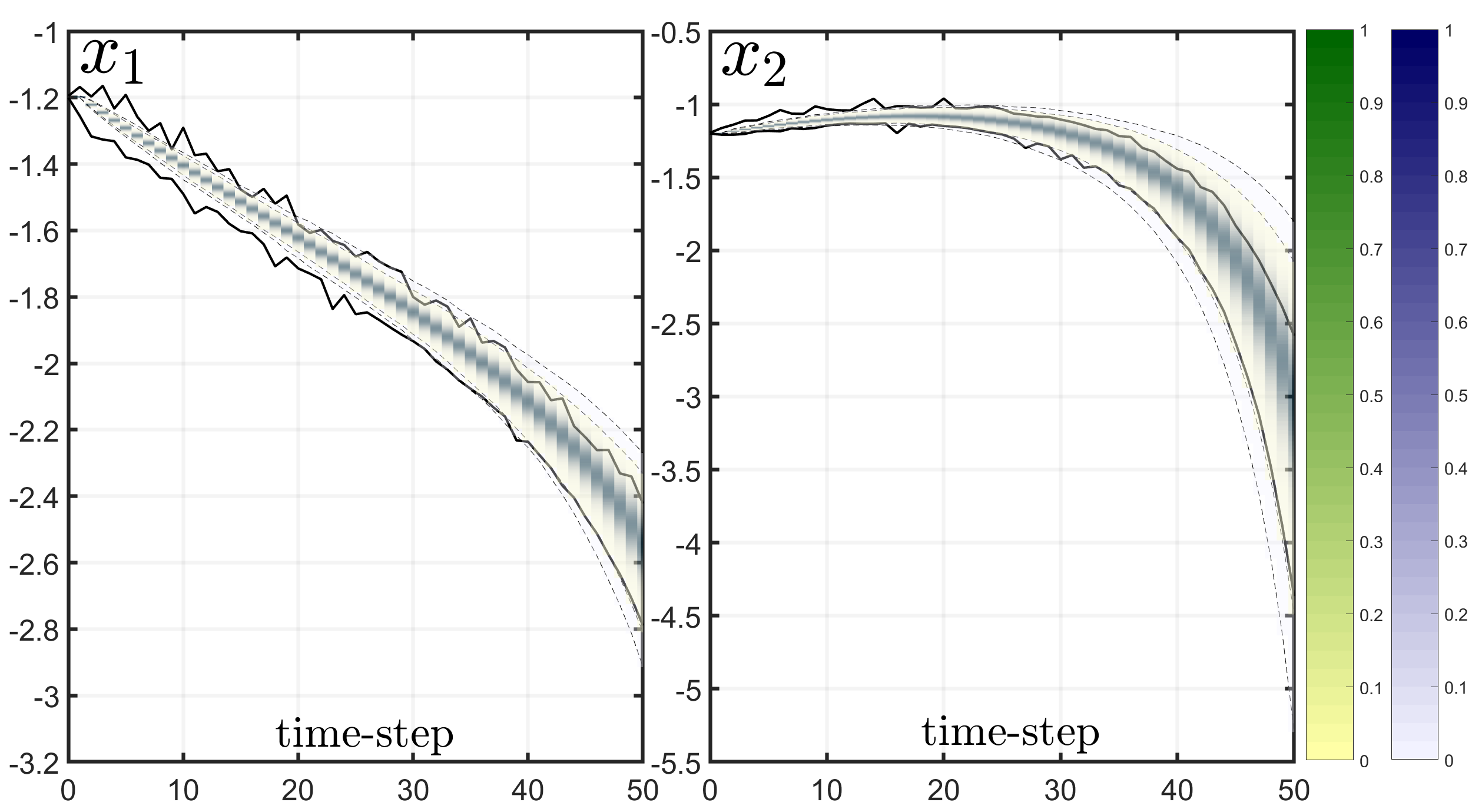}
        \caption{$(\tau,\ \delta) = (0.225,\ 0.77)$}
        \label{fig:vdp2}
    \end{subfigure}
    \hfill
    \begin{subfigure}{0.145\textwidth}
        \includegraphics[width=\linewidth]{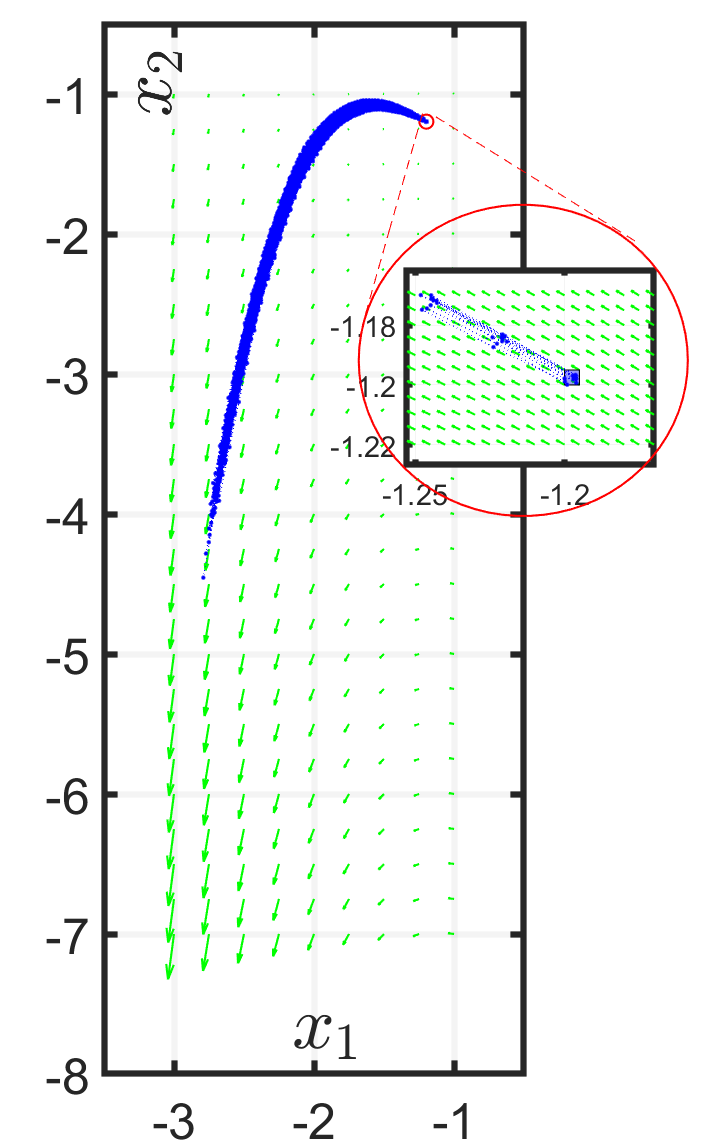}
        \caption{}
        \label{fig:vdp2_demo}
    \end{subfigure}
    \caption{Shows the density of trajectories starting from $\init_3$ versus their computed flowpipes. The green color-bar represents the density of traces from, $\distzero$ and the blue color-bar is for traces from $\dist$. The shaded areas are generated via $3\times10^5$ different trajectories, and the dotted lines represents their border. a) Shows two different flowpipes for TRVDP dynamics with confidence level of $0.9999$ on $\distzero$. The tighter flowpipe (blue color) utilizes the linear programming \eqref{eq:alphoptim} while the looser one (red color) does not. b) Shows a flowpipe that covers trajectories from $\dist$ with the confidence level of $77\%$ and also covers the traces from $\distzero$ with the confidence level of $99.5\%$. The blue shaded area is for $\dist$ and the green shaded area is for $\distzero$. c) Shows the vector field of TRVDP dynamics that illustrates the instability of the system. }
    % Shows the bounds from the flowpipe for TRVDP dynamics and one selected calibration dataset. The yellow shaded area represents the trajectories from shifted distribution, and the green shaded area is for original distribution. However, regarding the clarity of the plots, this is important to note, the green area is plotted on top of the yellow area. The dynamics is chaotic to the extent the projection of flowpipe on $x_2$ is still inside the green area (from original distribution) while its confidence level is $\delta> 99.5\%$. and $ x_1(0),x_2(0)\in[-1.2,\ -1.195]$}}
\end{figure*}

\mypara{$12$-Dimensional Quadcopter}
Next, we consider a $12$ dimensional quadcopter model from the benchmarks in \cite{huang2022polar} that is designed to hover around a pre-specified elevation. The ODE model for this system is provided as follows:

\begin{equation}
\begin{array}{ll}
\dot{x}_1 = & \cos(x_8) \cos(x_9)x_4  + (\sin(x_7) \sin(x_8) \cos(x_9)- \cos(x_7) \sin(x_9))x_5\\
            & + (\cos(x_7) \sin(x_8) \cos(x_9)+ \sin(x_7) \sin(x_9))x_6 + v_1\\
\dot{x}_2 = &\cos(x_8) \sin(x_9)x_4 + ( \sin(x_7)  \sin(x_8)  \sin(x_9)+ \cos(x_7)  \cos(x_9)) x_5\\
            & +( \cos(x_7)  \sin(x_8)  \sin(x_9)- \sin(x_7)  \cos(x_9)) x_6+v_2\\
\dot{x}_3 = & \sin(x_8) x_4- \sin(x_7)  \cos(x_8) x_5- \cos(x_7)  \cos(x_8) x_6+v_3\\
\dot{x}_4 = & x_{12} x_5-x_{11} x_6-9.81  \sin(x_8)+v_4\\
\dot{x}_5 = & x_{10} x_6-x_{12} x_4+9.81  \cos(x_8)  \sin(x_7)+v_5\\
\dot{x}_6 = & x_{11} x_4-x_{10} x_5+9.81  \cos(x_8)  \cos(x_7)-9.81-u_1/1.4+v_6\\
\dot{x}_7 = & x_{10}+( \sin(x_7) ( \sin(x_8)/ \cos(x_8))) x_{11} + ( \cos(x_7) ( \sin(x_8)/ \cos(x_8))) x_{12}+v_7\\
\dot{x}_8 = & \cos(x_7) x_{11}- \sin(x_7) x_{12}+v_8\\
\dot{x}_9 = & ( \sin(x_7)/ \cos(x_8)) x_{11}+( \cos(x_7)/ \cos(x_8)) x_{12}+v_9\\
\dot{x}_{10} = & -0.9259 x_{11} x_{12}+18.5185 u_2+v_{10}\\
\dot{x}_{11} = & 0.9259 x_{10} x_{12}+18.5185 u_3+v_{11}\\
\dot{x}_{12} = & v_{12},
\end{array}
\end{equation}
\navid{where the state consists of the position and velocity of the quadrotor $x_1,x_2,x_3$ and $x_4,x_5,x_6$, respectively, as well as the Euler angles $x_7,x_8,x_9$, i.e., roll, pitch, and yaw, and the angular velocities $x_{10},x_{11},x_{12}$. Here the initial set of states is defined as,
$\init_2 = \{\statee_0 \mid i \in [1,6]: -0.2 \le \statee_0(i) \le 0.2, i \ge 7: \statee_0(i) = 0 \}$.}
We also add additive noise to the system that is detailed in Table \ref{tbl:extended}, and we generate data with time step \navid{$\delta t = 0.05$ seconds over $100$ time steps (i.e. $5$ seconds)}. The controller is a neural network controller that was presented in \cite{huang2022polar}. We present $3$ experiments on this model. \navid{Learning a surrogate model to map the $12$-dimensional initial state to a $1200$-dimensional trajectory is impractical. We thus use an interpolation technique to resolve this issue. To that end, we select only certain time-steps of the $1200$-dimensional trajectory in order to map the initial state to state values at the selected time steps, while we take care of the remaining time steps via interpolation. If the trajectories are smooth, as is the case in this case study, this is expected to work well. We here select every second time-step  to extract a $600$-dimensional trajectory  $(\delta t =0.1, \horizon =50)$ to train a surrogate model of structure $[12, 200, 400, 600]$. Finally we interpolate the sampled $600$-dimensional trajectory to approximate the original $1200$-dimensional trajectory  ($\delta t =0.05, \horizon =100$). This interpolation process is integrated in the model in an analytical way, and is done by multiplying a weight matrix, $W \in \mathbb{R}^{1200\times 600}$  to the last layer. This converts the model's structure to $[12, 200, 400, 1200]$ which will be utilized for the surrogate reachability. The scaling factors $\omega_j,j\in[n\horizon]$ will be also interpolated for un-sampled time-steps after the training and before the linear programming. }

\myipara{Experiment 2} In comparison with \cite{hashemi2023data}, we provide a higher level of data efficiency. Consider a confidence level of $99.99\%$, and no distribution shift. We assume a calibration dataset of size \navid{$|\calibdataset| = 2\times 10^4$} to compute $R_{\delta, \tau}^*$ and the $\delta$-confident flowpipe, and a $\relu$ neural network of structure $[12,\ 20,\ 400,\ 1200]$ to train the surrogate model. The methodology proposed in \cite{hashemi2023data} requires a calibration dataset of at least \navid{$24\times 10^6$} data-points\footnote{ Minimum data size in \cite{hashemi2023data} is $|\calibdataset| > \lceil \frac{1+\gamma}{1-\gamma} \rceil$, where $\gamma = 1- \frac{1-\delta}{n\horizon}$.} to provide the mentioned level of confidence. On the other hand, we only require $10^4$ trajectories. Fig.~\ref{fig:quad_comparison} shows the proposed reach set and Table~\ref{tbl} presents the detail of the computation process. Our estimation shows that we achieve $\tilde{\delta}= 0.9999$ via $3\times 10^5$ trials and $\tilde{\Delta}= 1$ via $10^4$ trials, which aligns with our expectations.

\myipara{Experiments 3, 4} \navid{In this case study, we generate a $95\%$ confident flowpipe for the trajectories from $\distzero$ and we utilize it to study the distribution shift on two different deployment environments $\dist$. This flowpipe is  plotted in Figure \ref{fig:quad_comparison} and the details of the computation process is included in Tables \ref{tbl} and \ref{tbl:extended}. For this generated flowpipe, given a maximum distribution shift radius $\tau \in[0,1]$, the flowpipe's confidence level $\delta$ for trajectories from $\dist$ has to satisfy $\delta \geq 0.95 -\tau$. The bound $\delta \geq \bar{\delta}-\tau$ can  be derived from equation \eqref{eq:linearrule}.  Therefore, we consider two different scenarios. In  Experiment 3, we examine our flowpipe for the case $\tau = 0.15$. In this case, for a deployment environment with distribution shift, $\tilde{\tau} < 0.15$ we numerically show that $\tilde{\Delta}, \tilde{\delta} > 0.95-0.15 = 0.8$. In addition, in  Experiment 4, we assume $\tau = 0.25$ and for a deployment environment with $\tilde{\tau} < 0.25$ we show that $\tilde{\Delta}, \tilde{\delta} > 0.95-0.25 = 0.7$. Tables \ref{tbl} and \ref{tbl:extended} show the detail of the experiments and distribution shift respectively.}

\mypara{Time-reversed van Der Pol Oscillator Dynamics}
The time-reversed van Der Pol (TRVDP) dynamics is known for its inherent instability, which makes it a pernicious challenge for computing reach sets. The SDE model for TRVDP is:
\[ 
\begin{bmatrix} \dot{x}_1 & \dot{x}_2 \end{bmatrix}^\top = \begin{bmatrix} x_2 & \mu x_2 (1-x_1^2) -x_1\end{bmatrix}^\top+v,\quad \mu =-1,
\]
here, $v$ is an additive Gaussian noise, detailed in Table~\ref{tbl:extended}. We generate data from this dynamics with sampling time $\delta t = 0.02$ seconds, and we target reachability for $\horizon=50$ time step. We use a limited set of initial states  $\init_3=\left\{\statee_0 \mid [-1.2, -1.2] \leq \statee_0 \leq [-1.195, -1.195]\right\}$ to investigate the instability of the system dynamics. Our analysis centers on discerning how this instability manifests as a divergence in trajectories originating from this restricted set of initial states. We also assume a model with structure $[2,\ 50, \ 90, \ 100]$ to train the surrogate model. We perform two experiments on this system, explained below.

% \myipara{Experiment 5} We assume the set of initial state to be $\init_3=\left\{\statee_0 \mid [-1.2, -1.2] \leq \statee_0 \leq [-1.0, -1.0]\right\}$ and plan to generate a flowpipe that covers at least $91\%$ of trajectories even if they are shifted with $\tau < 0.085$ that is measured via total variation. We use a $\relu$ neural network with structure $[2,\ 50, \ 90, \ 100]$ to train the surrogate model. The specification and the details of reachability analysis and system's stochasticity are all presented in Tables~\ref{tbl} and \ref{tbl:extended}. Figure \ref{fig:vdp1} demonstrates the resulting flowpipe.

\myipara{Experiment 5} In this experiment, we target the flowpipe computation for the TRVDP dynamics for the confidence probability of $\delta \ge 99.99\%$ and no distribution shift. Figure \ref{fig:vdp1} shows the resulting flowpipe and Table~\ref{tbl} shows the details of the process. In this experiment, we also generate another $0.9999$-confident flowpipe excluding the linear programming (proposed in equation \eqref{eq:alphoptim}) from the process. Figure~\ref{fig:vdp1} also compares these flowpipe and shows removing the linear programming increases the level of conservatism.

\myipara{Experiment 6}  
We target \navid{an arbitrary} confidence level of $\delta \geq 0.77$ for the flowpipe, despite distribution shifts within radius $\tau< 0.225$ measured in total variation. As suggested by robust conformal inference, we should target a flowpipe with confidence level of $99.5\% = 77\% + 22.5\%$ on $\distzero$ to ensure the confidence level of $77\%$ on $\dist$. Figure \ref{fig:vdp2} shows our probabilistically guaranteed flowpipe, and Tables \ref{tbl},\ref{tbl:extended} present the detail of the experiment. These tables also show that, in case we set $\bar{\epsilon} = \epsilon$ in reachability analysis (Vanilla CI) then our flowpipe, violates the guarantee (i.e. $\delta \geq 0.77$). This emphasizes on the contribution of robust conformal inference.

\section{Conclusion}
\label{sec:conclusion}
% Robust conformal inference has been recently developed and proposed in notable works like
% \cite{cauchois2020robust,tibshirani2019conformal}. The robust form of conformal
% inference has shown promising results in various studies, as seen in
% \cite{zhao2023robust,gupta2021s,si2023pac,yilmaz2022test}. However, in this
% research similar to the research study \cite{zhao2023robust}, our primary focus
% lies in the methodology proposed in \cite{cauchois2020robust}. 

\mypara{Conclusion} This paper addresses challenges in data-driven reachability
analysis for stochastic dynamical systems, specifically focusing on 
distribution shifts between training and test environments. By
leveraging a dataset of $\horizon$-step trajectories, the approach constructs a
probabilistic flowpipe, ensuring that the probability of trajectory violation
remains below a user-defined threshold even in the presence of distribution
shifts. We propose the reliable guarantees with higher data-efficiency compared
to the existing techniques assuming knowledge of an upper bound for distribution shift. The methodology relies on three key principles:
surrogate model learning, reachability analysis using the surrogate model, and
robust conformal inference for probabilistic guarantees. We illustrated the
efficacy of our approach via reachability analysis on high-dimensional systems
like a $12$-dimensional quadcopter and unstable systems like the time-reversed van Der Pol oscillator.

%\mypara{Future work } We have defined the residuals as $R_i = \max(\alpha_1 R_i^1, \cdots, \alpha_{n\horizon}R_i^{n\horizon} ), i\in[|\calibdataset|]$ and we then trained the scaling factors $\alpha_j, j\in[n\horizon]$. In this case, the inflating zonotope is simply a hyper-rectangle. But if we define residual as $R_i = \max(\sum_{j=1}^{n\horizon} \alpha_{1,j}R_i^j, \cdots, \sum_{j=1}^{n\horizon} \alpha_{n\horizon,j}R_i^j  ), i\in[|\calibdataset|]$ and then train the squared matrix, containing parameters, $\alpha_{p,q} , p,q\in[n\horizon]$ then our inflating zonotope is a general zonotope which can reduce the level of conservatism noticeably. We consider this update in the future work. 

% 
% and leverage robust conformal inference to quantify prediction
% uncertainty using calibration data from $\distzero$. 
% \item Inspired by the research work \cite{cleaveland2023conformal}, we utilize the $\max()$ operator for residual normalization. 
% 
% W
% \item  We propose a linear programming approach that enhances the accuracy of reachability analysis while adhering to the principles of conformal inference.
% 
% 
% 
% \end{itemize}
% 

\section*{Acknowledgments}
This work was supported by the National Science Foundation through the following grants: CAREER award (SHF-2048094), CNS-1932620,  FMitF-1837131, CCF-SHF-1932620, the Airbus Institute for Engineering Research, and funding by Toyota R\&D and Siemens Corporate Research through the USC Center for Autonomy and AI.

\bibliographystyle{IEEEtran}
\bibliography{sample-base}
% \addtolength{\textheight}{-2cm}   % This command serves to balance the column lengths
%                                   % on the last page of the document manually. It shortens
%                                   % the textheight of the last page by a suitable amount.
%                                   % This command does not take effect until the next page
%                                   % so it should come on the page before the last. Make
%                                   % sure that you do not shorten the textheight too much.
\end{document}